\documentclass{article}

\usepackage[preprint]{neurips}
\usepackage[T1]{fontenc}
\usepackage[utf8]{inputenc}
\usepackage{lmodern}
\usepackage{microtype}
\usepackage{amsmath,amssymb,amsthm,mathtools,bm}
\usepackage{graphicx}
\usepackage{capt-of}
\usepackage{subcaption}
\usepackage{booktabs}
\usepackage{enumitem}
\usepackage[table]{xcolor}
\usepackage{algorithm}
\usepackage{algpseudocode}
\usepackage{placeins}
\usepackage{hyperref}
\usepackage[nameinlink,noabbrev]{cleveref}
\usepackage{tabularx}
\usepackage{array}

\hypersetup{
  colorlinks=true,
  linkcolor=blue!60!black,
  citecolor=blue!60!black,
  urlcolor=blue!60!black
}

\newtheorem{theorem}{Theorem}[section]
\newtheorem{lemma}[theorem]{Lemma}
\newtheorem{proposition}[theorem]{Proposition}
\newtheorem{corollary}[theorem]{Corollary}
\newtheorem{remark}[theorem]{Remark}

\newcommand{\E}{\mathbb{E}}
\newcommand{\R}{\mathbb{R}}
\newcommand{\tr}{\operatorname{tr}}
\newcommand{\rank}{\operatorname{rank}}
\newcommand{\Range}{\operatorname{range}}
\newcommand{\Null}{\operatorname{null}}
\newcommand{\Id}{I}

\newcommand{\Pbar}{\bar{P}}
\newcommand{\vecop}{\operatorname{vec}}

\title{From Order to Distribution: A Spectral Characterization of Forgetting in Continual Learning}

\author{
\textbf{Zonghuan Xu$^{1}$, Xingjun Ma$^{1}$\thanks{Corresponding author: \texttt{xingjunma@fudan.edu.cn}.}}\\
\textnormal{$^{1}$Institute of Trustworthy Embodied AI, Fudan University, Shanghai, China}\\
\textnormal{Shanghai Key Laboratory of Multimodal Embodied AI, Shanghai, China}\\
\textnormal{\texttt{2430XH10002@m.fudan.edu.cn}, \texttt{xingjunma@fudan.edu.cn}}
}
\date{}

\begin{document}
\maketitle

\begin{abstract}
A central challenge in continual learning is forgetting, the loss of performance on previously learned tasks induced by sequential adaptation to new ones. While forgetting has been extensively studied empirically, rigorous theoretical characterizations remain limited. A notable step in this direction is \citet{evron2022catastrophic}, which analyzes forgetting under random orderings of a fixed task collection in overparameterized linear regression. We shift the perspective from order to distribution. Rather than asking how a fixed task collection behaves under random orderings, we study an exact-fit linear regime in which tasks are sampled i.i.d.\ from a task distribution~$\Pi$, and ask how the generating distribution itself governs forgetting. In this setting, we derive an exact operator identity for the forgetting quantity, revealing a recursive spectral structure. Building on this identity, we establish an unconditional upper bound, identify the leading asymptotic term, and, in generic nondegenerate cases, characterize the convergence rate up to constants. We further relate this rate to geometric properties of the task distribution, clarifying what drives slow or fast forgetting in this model.
\end{abstract}

\section{Introduction}
In continual learning, forgetting is usually observed along a sequence of tasks: as new tasks are learned, performance on earlier ones deteriorates. A natural theoretical question is therefore what governs this deterioration. In the exact-fit overparameterized linear regime, the most closely related prior work, \citet{evron2022catastrophic}, studies this question from an order-based viewpoint: given a fixed collection of tasks, how does forgetting behave under a random ordering? In this paper, we shift the focus from order to distribution. Rather than treating forgetting primarily as a function of a realized task order, we ask how it is governed by the task distribution that generates the tasks.

Concretely, we study forgetting in overparameterized linear regression under a shared-solution assumption: each rank-deficient task $(X_t,y_t)$ is consistent with a common parameter $w^\star$, and tasks arrive as i.i.d.\ draws from a task distribution $\Pi$. Learning proceeds by exact fitting: starting from $w_0=0$, at each step we select, among all exact solutions of the current task, the one closest to the previous iterate $w_{t-1}$. This update is the asymptotic selection rule of gradient-based least-squares training in the consistent overparameterized regime. The resulting sequential process defines iterates $w_1,w_2,\dots$, and our object of study is the forgetting quantity
\[
F^\Pi(k):=\mathbb{E}\!\left[\frac1k\sum_{t=1}^k \ell_t(w_k)\right],
\qquad
\ell_t(w):=\frac1{n_t}\|X_t w-y_t\|_2^2.
\]

This shift from order to distribution matters for two reasons. First, in many continual-learning settings, the natural object is not a fixed task list but a task-generating source with recurring statistical structure. Second, once tasks are modeled as i.i.d.\ draws from $\Pi$, it becomes meaningful to ask which properties of $\Pi$ make forgetting slow or fast. Our goal is to make this dependence explicit.

A key starting point in prior analysis is the projection dynamics
\[
w_t-w^\star=P_t(w_{t-1}-w^\star),
\]
which provides a tractable state-level description of sequential exact fitting. However, the quantity we ultimately want to characterize is not the state itself, but the forgetting performance. Writing $e_k:=w_k-w^\star$, we have
\[
\ell_t(w_k)=e_k^\top C_t e_k,
\qquad
C_t:=\frac1{n_t}X_t^\top X_t.
\]
Thus, while the projection dynamics describe how the error evolves, they do not by themselves close the analysis for the forgetting quantity.

Existing analyses bridge this gap through projector-based residual bounds such as
\[
\|X_t(w-w^\star)\|_2^2\le \|(I-P_t)(w-w^\star)\|_2^2.
\]
This reduction is effective for proving random-order guarantees, but it suppresses the visible covariance $C_t$ that directly determines the loss. In particular, rescaling a task by $(X_t,y_t)\mapsto(\varepsilon X_t,\varepsilon y_t)$ leaves $P_t$ and the exact-fit dynamics unchanged, while multiplying the forgetting loss by $\varepsilon^2$. Projector-based quantities therefore do not, by themselves, determine the scale of forgetting or its finer asymptotic behavior.

Our objective is therefore to analyze the convergence of $F^\Pi(k)$ itself in the i.i.d.\ exact-fit model. In particular, we seek a description that identifies the operator structure governing this convergence, yields sharp rate characterizations, and makes explicit how the geometry of the task distribution $\Pi$ controls slow or fast forgetting.

\paragraph{Our Contributions.}
We make four main contributions.

\begin{enumerate}[leftmargin=1.5em]
\item \textbf{From random-order forgetting to distribution-governed forgetting.}
We reformulate forgetting in the exact-fit overparameterized linear regime as a quantity governed by the task distribution~$\Pi$, rather than by the random ordering of a fixed task collection. This makes the generating distribution itself the central object of analysis, and leads to an exact operator identity for the forgetting quantity that retains the visible task covariance together with the projection dynamics. It makes clear that projector information alone does not determine the scale of actual forgetting.

\item \textbf{Exact spectral characterization of forgetting.}
Building on this identity, we derive an exact spectral expansion of \(F^\Pi(k)\) that separates decay scales from activation coefficients. This yields an unconditional exponential upper bound, an explicit characterization of the leading asymptotic term, and, in generic nondegenerate regimes, convergence rates that are sharp up to constants. In boundary cases, the expansion also explains when slower behavior, including the classical \(O(1/k)\) law, reappears.

\item \textbf{Geometric interpretation of the rate-controlling quantity.}
We relate the rate-controlling quantity \(\rho_\Pi\) to task geometry through principal angles with task null spaces. This yields an interpretable picture: forgetting is slow when important error directions remain weakly visible across tasks, whereas richer and more complementary task families reduce \(\rho_\Pi\) and accelerate decay.

\item \textbf{A fundamental obstruction to uniform positive lower bounds.}
We show that no unconditional positive lower bound for actual forgetting can hold in general: even commuting projector families may exhibit identically zero actual forgetting. This identifies a basic obstruction to any fully uniform positive lower-bound theory in this regime.
\end{enumerate}

\section{Setup}\label{sec:setup}

We work with regression tasks \((X,y)\), where \(X\in\mathbb{R}^{n\times d}\) and \(y\in\mathbb{R}^n\), in the overparameterized regime \(\operatorname{rank}(X)<d\). As in prior linear analyses, we assume that the tasks share a common solution: there exists \(w^\star\in\mathbb{R}^d\) such that \(y=Xw^\star\) for every task under consideration.

Given a task \((X,y)\) and an initialization \(u\in\mathbb{R}^d\), the task-wise exact-fit update is defined by \(\mathcal S_{X,y}(u):=\arg\min_{w\in\mathbb{R}^d}\frac12\|w-u\|_2^2\) subject to \(Xw=y\). A standard projection calculation gives
\[
\mathcal S_{X,y}(u)=u+X^+(y-Xu).
\]
Under the common-solution assumption \(y=Xw^\star\), this becomes
\[
\mathcal S_{X,y}(u)=w^\star+P(u-w^\star),
\qquad
P:=I-X^+X.
\]

Tasks arrive as an i.i.d.\ stream from a task distribution \(\Pi\): \((X_1,y_1),(X_2,y_2),\ldots\stackrel{\mathrm{i.i.d.}}{\sim}\Pi\). Starting from \(w_0=0\), the sequential exact-fit iterates are defined by \(w_t:=\mathcal S_{X_t,y_t}(w_{t-1})\) for \(t\ge 1\). For convenience, we write \(P_t:=I-X_t^+X_t\), \(C_t:=\frac{1}{n_t}X_t^\top X_t\), and \(X_\star:=w^\star w^{\star\top}\). Since \(P_t\) is the orthogonal projector onto \(\Null(X_t)\), we also have \(\Range(P_t)=\Null(X_t)\).

Our object of study is the forgetting quantity \(F^\Pi(k):=\mathbb{E}\!\left[\frac1k\sum_{t=1}^k \ell_t(w_k)\right]\), where \(\ell_t(w):=\frac1{n_t}\|X_t w-y_t\|_2^2\). We assume throughout that the expectations appearing below are finite; in particular, \(\mathbb E\|C_t\|_F<\infty\).

\section{Exact Dynamics for Forgetting}\label{sec:exact-dynamics}
In this section, we expose the operator-spectral structure of forgetting. We first derive an identity for \(F^\Pi(k)\), and then build on this identity to obtain a spectral expansion that makes explicit how different spectral levels contribute to forgetting. These results form the basis for the convergence-rate analysis in the next section.

To analyze the convergence of forgetting, we would like to work directly with the loss \(F^\Pi(k)\) itself. The difficulty is that, for a fixed past task \(t\), the quantity \(\ell_t(w_k)\) is evaluated at the later iterate \(w_k\), and therefore couples the visible structure of task \(t\) with the subsequent exact-fit dynamics. The next theorem resolves this coupling by giving an exact loss-level identity for \(F^\Pi(k)\).

\begin{theorem}[Exact forgetting identity]\label{thm:exact-forgetting-identity}
Let \(X_\star:=w^\star w^{\star\top}\) and \(C_t:=\frac{1}{n_t}X_t^\top X_t\). For every symmetric matrix \(A\), define \(S_\Pi(A):=\mathbb E[P_t A P_t]\). This operator is independent of \(t\) because the task sequence is i.i.d. Under the common-solution assumption, for every \(k\ge 2\),
\[
F^\Pi(k)=\frac1k\sum_{t=1}^{k-1}\mathbb E\tr\!\Big(S_\Pi^{k-t}(C_t)\,P_t\,S_\Pi^{t-1}(X_\star)\,P_t\Big).
\]
\end{theorem}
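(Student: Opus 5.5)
The plan is to write everything in terms of the error $e_k:=w_k-w^\star$ and then take conditional expectations step by step, using independence of the i.i.d.\ tasks.

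\textbf{Error recursion and the loss as a trace.} By the projection form of $\mathcal S_{X,y}$ from Section~\ref{sec:setup}, we have $e_t=P_te_{t-1}$. Hence
\[
e_k=P_kP_{k-1}\cdots P_1e_0,\qquad e_0=-w^\star,
\]
so that $e_0e_0^\top=X_\star$. Since $y_t=X_tw^\star$, we get $\ell_t(w_k)=e_k^\top C_te_k=\tr(C_t\,e_ke_k^\top)$. The term $t=k$ vanishes: $X_kP_k=0$ gives $C_kP_k=0$, and $e_k=P_ke_{k-1}$. This explains why the sum in the statement stops at $k-1$.

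\textbf{Splitting each term at the index $t$.} Fix $1\le t\le k-1$ and set $Q:=P_k\cdots P_{t+1}$. Then $e_k=QP_te_{t-1}$, and by cyclicity of the trace
\[
\ell_t(w_k)=\tr\!\big(Q^\top C_tQ\,P_te_{t-1}e_{t-1}^\top P_t\big).
\]
The matrix $e_{t-1}e_{t-1}^\top$ depends only on tasks $1,\dots,t-1$. The pair $(C_t,P_t)$ depends only on task $t$. The matrix $Q$ depends only on tasks $t+1,\dots,k$. These three groups are independent.

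\textbf{Integrating out the later tasks.} Condition on tasks $1,\dots,t$ and take the expectation over tasks $t+1,\dots,k$ first. Since $Q^\top C_tQ=P_{t+1}\cdots P_k\,C_t\,P_k\cdots P_{t+1}$, I peel off the innermost projector first. Averaging over task $k$ with $C_t$ held fixed gives $S_\Pi(C_t)$, by independence and the definition of $S_\Pi$. Averaging next over task $k-1$ gives $S_\Pi^2(C_t)$, and so on. After $k-t$ steps,
\[
\E[Q^\top C_tQ\mid \text{tasks }1..t]=S_\Pi^{k-t}(C_t).
\]
This uses the tower property together with the fact that each $P_s$ is i.i.d.\ and independent of everything already fixed.

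\textbf{Integrating out the earlier tasks.} Now condition on task $t$ and average over tasks $1,\dots,t-1$. Because these tasks are independent of task $t$, and $e_{t-1}e_{t-1}^\top=P_{t-1}\cdots P_1X_\star P_1\cdots P_{t-1}$, the same peeling argument gives $\E[e_{t-1}e_{t-1}^\top]=S_\Pi^{t-1}(X_\star)$. Here one peels from the outside, and by independence the order of integration is immaterial.

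\textbf{Conclusion.} Combining the two steps and using linearity of the trace,
\[
\E\,\ell_t(w_k)=\E\tr\!\big(S_\Pi^{k-t}(C_t)\,P_t\,S_\Pi^{t-1}(X_\star)\,P_t\big).
\]
Summing over $t=1,\dots,k-1$ and dividing by $k$ gives the identity in the statement.

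\textbf{Main obstacle.} The only genuinely delicate point is justifying the interchanges of expectation and trace, i.e.\ Fubini and conditional expectation of matrix-valued variables. These hold because every $P_s$ has operator norm at most $1$, $\|X_\star\|$ is deterministic, and $\E\|C_t\|_F<\infty$ is assumed. Consequently every integrand is dominated by $\|C_t\|_F\,\|w^\star\|^2$ up to a dimension constant, which is integrable. The operator $S_\Pi$ also maps symmetric matrices to symmetric matrices, so the iterates stay in its domain.
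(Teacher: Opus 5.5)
Your proposal is correct and follows essentially the same route as the paper's proof. You write $e_k=-P_k\cdots P_1w^\star$, split the product at index $t$ into future, current and past factors, and express $\ell_t(w_k)$ as a trace by cyclicity; integrating out the later and earlier tasks by independence gives $S_\Pi^{k-t}(C_t)$ and $S_\Pi^{t-1}(X_\star)$, the $t=k$ term vanishes by exact fitting, and your integrability remark (domination by $\|C_t\|_F\|w^\star\|_2^2$) is a harmless refinement the paper leaves implicit.
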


\paragraph{Proof sketch.}
Let \(e_t:=w_t-w^\star\). By the exact-fit update from \Cref{sec:setup}, \(w_t=w^\star+P_t(w_{t-1}-w^\star)\), hence \(e_t=P_t e_{t-1}\) for every \(t\ge 1\). Iterating gives \(e_t=-P_tP_{t-1}\cdots P_1w^\star\), since \(e_0=w_0-w^\star=-w^\star\).

Fix \(t\le k-1\), and write \(R_{t,k}:=P_kP_{k-1}\cdots P_{t+1}\) and \(L_{t-1}:=P_{t-1}P_{t-2}\cdots P_1\), with the convention \(L_0=I\). Then \(e_k=R_{t,k}e_t=-R_{t,k}P_tL_{t-1}w^\star\). Therefore \(e_k e_k^\top=R_{t,k}P_tL_{t-1}X_\star L_{t-1}^\top P_tR_{t,k}^\top\), and so
\[
\ell_t(w_k)=\frac1{n_t}\|X_t e_k\|_2^2=\tr(C_t e_k e_k^\top)=\tr\!\big(R_{t,k}^\top C_t R_{t,k}\,P_t\,L_{t-1}X_\star L_{t-1}^\top P_t\big),
\]
where we used \(\ell_t(w_k)=e_k^\top C_t e_k=\tr(C_t e_k e_k^\top)\) and cyclicity of trace.

Now take expectations. Since future tasks \((X_{t+1},y_{t+1}),\dots,(X_k,y_k)\) are independent of the current and past tasks, repeated conditioning over \(P_{t+1},\dots,P_k\) yields \(\mathbb E[R_{t,k}^\top C_t R_{t,k}\mid X_t,y_t]=S_\Pi^{k-t}(C_t)\). Similarly, repeated conditioning over \(P_1,\dots,P_{t-1}\) gives \(\mathbb E[L_{t-1}X_\star L_{t-1}^\top]=S_\Pi^{t-1}(X_\star)\). Using these two identities and the independence of past and future given the current task, we obtain
\[
\mathbb E\,\ell_t(w_k)=\mathbb E\tr\!\Big(S_\Pi^{k-t}(C_t)\,P_t\,S_\Pi^{t-1}(X_\star)\,P_t\Big).
\]

Finally, \(F^\Pi(k)=\mathbb E\!\left[\frac1k\sum_{t=1}^k \ell_t(w_k)\right]\). Since \(\ell_k(w_k)=0\) by exact fitting on the \(k\)-th task, the \(t=k\) term vanishes, and averaging the identity above over \(t=1,\dots,k-1\) gives the claim. A full proof is given in Appendix~\ref{app:proof-exact-forgetting-identity}.

Once the exact forgetting identity is available, the next question is which part of it controls long-run convergence. The dynamics themselves are carried by repeated applications of the operator \(S_\Pi\), so the natural object to analyze spectrally is \(S_\Pi\). The remaining task- and target-dependent structure can be aggregated into coefficients that measure how different spectral modes are activated inside the loss.

\begin{theorem}[Exact spectral expansion]\label{thm:exact-spectral-expansion}
Let \(\mathbb S^d:=\{A\in\mathbb R^{d\times d}:A^\top=A\}\), equipped with the Frobenius inner product. Let \(\mathbb S^d=\mathcal E_1\oplus\cdots\oplus\mathcal E_R\) be the orthogonal decomposition of \(\mathbb S^d\) into eigenspaces of \(S_\Pi\), and let \(\rho_1,\dots,\rho_R\) be the corresponding spectral levels, so that \(S_\Pi(Y)=\rho_r Y\) for every \(Y\in\mathcal E_r\). Let \(\mathcal Q_r:\mathbb S^d\to\mathcal E_r\) be the Frobenius-orthogonal projector onto \(\mathcal E_r\). For each \(r,s\in\{1,\dots,R\}\), define the spectral coupling coefficient
\[
\beta_{rs}:=\mathbb E\tr\!\big(\mathcal Q_r(C_t)\,P_t\,\mathcal Q_s(X_\star)\,P_t\big),
\]
which measures how the \(r\)-th spectral component of the task covariance couples to the \(s\)-th spectral component of the shared target inside the loss, and is independent of \(t\). For every \(k\ge 2\),
\[
F^\Pi(k)=\frac1k\sum_{r,s=1}^R \beta_{rs}\,\Gamma_{rs}(k).
\]
Moreover, \(\Gamma_{rs}(k)=(k-1)\rho_r^{\,k-1}\) if \(r=s\), and \(\Gamma_{rs}(k)=\frac{\rho_r^{\,k-1}-\rho_s^{\,k-1}}{\rho_r-\rho_s}\) if \(r\neq s\).
\end{theorem}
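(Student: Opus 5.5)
The plan is to start from the exact forgetting identity (\Cref{thm:exact-forgetting-identity}) and expand both powers of \(S_\Pi\) in its eigenbasis. First I will check that \(S_\Pi\) is self-adjoint on \(\mathbb S^d\) with respect to the Frobenius inner product. For symmetric \(A,B\), cyclicity of the trace and \(P_t^\top=P_t\) give
\[
\langle S_\Pi(A),B\rangle=\mathbb E\tr(P_tAP_tB)=\mathbb E\tr(AP_tBP_t)=\langle A,S_\Pi(B)\rangle.
\]
By the spectral theorem, \(\mathbb S^d\) therefore splits Frobenius-orthogonally into eigenspaces \(\mathcal E_r\) with real levels \(\rho_r\), and the orthogonal projectors \(\mathcal Q_r\) are well defined. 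This justifies the decomposition assumed in the statement. Consequently, for every symmetric \(A\) and \(m\ge 0\),
\[
S_\Pi^m(A)=\sum_r \rho_r^m\,\mathcal Q_r(A).
\]
(I read \(\rho_r^0=1\) even when \(\rho_r=0\).)

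Next, substitute \(S_\Pi^{k-t}(C_t)=\sum_r\rho_r^{k-t}\mathcal Q_r(C_t)\) and \(S_\Pi^{t-1}(X_\star)=\sum_s\rho_s^{t-1}\mathcal Q_s(X_\star)\) into each summand of \Cref{thm:exact-forgetting-identity}. Since \(C_t\) is symmetric and \(X_\star\) is deterministic, bilinearity of \((A,B)\mapsto\tr(AP_tBP_t)\) and linearity of expectation give
\[
\mathbb E\tr\big(S_\Pi^{k-t}(C_t)P_tS_\Pi^{t-1}(X_\star)P_t\big)=\sum_{r,s}\rho_r^{k-t}\rho_s^{t-1}\,\beta_{rs}.
\]
Pulling the scalars out of the expectation is legitimate because the coefficients \(\rho_r^{k-t}\rho_s^{t-1}\) are deterministic. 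The only point requiring care is integrability, and here I will use the standing assumption \(\mathbb E\|C_t\|_F<\infty\): \(\mathcal Q_r\) is a contraction in Frobenius norm and \(\|P_t\|\le1\), so each \(\beta_{rs}\) is finite. The coefficient \(\beta_{rs}\) is independent of \(t\) because the pair \((C_t,P_t)\) has the same law for every \(t\).

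Summing over \(t=1,\dots,k-1\) reduces everything to the scalar sum
\[
\Gamma_{rs}(k):=\sum_{t=1}^{k-1}\rho_r^{k-t}\rho_s^{t-1}.
\]
Reindexing with \(j=t-1\) turns this into \(\sum_{j=0}^{k-2}\rho_r^{k-1-j}\rho_s^{j}\).

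If \(r=s\), every term equals \(\rho_r^{k-1}\), giving \((k-1)\rho_r^{k-1}\). If \(r\ne s\), then \(\rho_r\ne\rho_s\) because distinct eigenspaces carry distinct levels. The factorization
\[
a^n-b^n=(a-b)\sum_{j=0}^{n-1}a^{n-1-j}b^j
\]
with \(n=k-1\) yields a sum equal to
\[
\frac{\rho_r^{k-1}-\rho_s^{k-1}}{\rho_r-\rho_s}\cdot\frac{\rho_r}{\rho_r}?
\]
This is where I must be careful with exponents. The sum is \(\rho_r\sum_{j=0}^{k-2}\rho_r^{k-2-j}\rho_s^j=\rho_r\,\frac{\rho_r^{k-1}-\rho_s^{k-1}}{\rho_r-\rho_s}\), so on my indexing there is an extra factor of \(\rho_r\). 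The same applies in the diagonal case, which would read \((k-1)\rho_r^{k}\) rather than \((k-1)\rho_r^{k-1}\).

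I expect this bookkeeping step to be the main obstacle: reconciling the exponents in \Cref{thm:exact-forgetting-identity} with the stated \(\Gamma_{rs}\). My first attempt will be to absorb the factor. Because \(\mathcal Q_r\) commutes with \(S_\Pi\), one may write \(\rho_r\mathcal Q_r(C_t)=\mathcal Q_r(S_\Pi(C_t))\). I will therefore check whether the intended \(\beta_{rs}\) should use \(S_\Pi(C_t)\) in place of \(C_t\), equivalently whether the identity's exponent is really \(k-t-1\) with the extra projection \(P_k\)-average folded in. The structural argument is unaffected either way: a self-adjoint spectral decomposition followed by a two-level geometric sum.
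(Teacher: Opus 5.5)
Your reduction follows the paper's argument step for step: self-adjointness of $S_\Pi$, $S_\Pi^m(A)=\sum_r\rho_r^m\mathcal Q_r(A)$, bilinearity, $t$-independence of $\beta_{rs}$, and reduction to $\Gamma_{rs}(k)=\sum_{t=1}^{k-1}\rho_r^{k-t}\rho_s^{t-1}$. Your off-diagonal evaluation $\Gamma_{rs}(k)=\rho_r\frac{\rho_r^{k-1}-\rho_s^{k-1}}{\rho_r-\rho_s}$ is correct. The discrepancy you found is an error in the statement itself. The paper's proof writes $\Gamma_{rs}(k)=\rho_r^{k-1}\sum_{t=1}^{k-1}(\rho_s/\rho_r)^{t-1}$ and then drops the factor $\rho_r$ that comes from $1-\rho_s/\rho_r=(\rho_r-\rho_s)/\rho_r$. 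At $k=2$ the sum is the single term $\rho_r$, while the stated formula gives $1$.

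This is not a harmless edge case. Since $X_tP_t=0$ gives $C_tP_t=0$, one has $\sum_r\beta_{rs}=\mathbb E\tr(C_tP_t\mathcal Q_s(X_\star)P_t)=0$. So off-diagonal couplings cannot all vanish once a diagonal one is nonzero. Here is a concrete case. Take $d=2$, $n_t=1$, $X_t=x_t^\top$ with $x_t$ uniform on the unit circle, and $\|w^\star\|_2=1$.
\begin{itemize}
\item Then $S_\Pi(A)=\frac18(\tr(A)I+2A)$, so $\rho_1=\frac12$ on $\operatorname{span}(I)$ and $\rho_2=\frac14$ on traceless matrices.
\item The couplings are $\beta_{11}=\frac14$, $\beta_{21}=-\frac14$, and $\beta_{12}=\beta_{22}=0$.
\item The stated formula gives $F^\Pi(2)=\frac12(\frac18-\frac14)<0$, which is impossible for a loss.
\item A direct computation gives $F^\Pi(2)=\frac1{32}$. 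Write $v_t$ for the unit vector spanning $\Null(X_t)$ and $\psi$ for the angle between $v_1$ and $v_2$. Then $\ell_1(w_2)=\sin^2\psi\cos^2\psi\,(v_1^\top w^\star)^2$, whose mean is $\frac18\cdot\frac12$.
\item The value $\frac1{32}$ is exactly what the corrected $\Gamma_{21}(2)=\rho_2$ produces.
\end{itemize}

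Where your proposal itself goes wrong is the diagonal case and the proposed repair. Every summand has total exponent $(k-t)+(t-1)=k-1$, so $\Gamma_{rr}(k)=(k-1)\rho_r^{k-1}$, exactly as you first wrote; there is no extra factor there. Consequently, absorbing $\rho_r$ into the coefficients cannot rescue the statement. This covers both of your options: using $S_\Pi(C_t)$ in $\beta_{rs}$, or lowering the exponent to $k-t-1$. Either divides every term by $\rho_r$, which fixes the off-diagonal case but turns the diagonal into $(k-1)\rho_r^{k-2}$.

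The exponents in \Cref{thm:exact-forgetting-identity} are right: there are exactly $k-t$ later projectors $P_{t+1},\dots,P_k$. The two stated cases are simply incompatible, since as $\rho_s\to\rho_r$ the stated off-diagonal expression tends to $(k-1)\rho_r^{k-2}$, not to the diagonal value. So your argument does not (and cannot) prove the theorem as written. Its correct conclusion is the theorem with $\Gamma_{rs}(k)=\rho_r\frac{\rho_r^{k-1}-\rho_s^{k-1}}{\rho_r-\rho_s}$ for $r\neq s$, which your computation already establishes. Because this differs only by the bounded factor $\rho_r\le\rho_\Pi$, the estimates $\Gamma_{1s}(k),\Gamma_{r1}(k)=O(\rho_\Pi^{k-1})$ used in \Cref{thm:leading-term} survive unchanged.
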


\paragraph{Proof sketch.}
Because \(\mathbb S^d=\mathcal E_1\oplus\cdots\oplus\mathcal E_R\) is an orthogonal eigenspace decomposition of \(S_\Pi\), every \(A\in\mathbb S^d\) satisfies \(A=\sum_{r=1}^R \mathcal Q_r(A)\), and therefore \(S_\Pi^m(A)=\sum_{r=1}^R \rho_r^m \mathcal Q_r(A)\) for every integer \(m\ge 0\).

Apply this to the identity from Theorem~\ref{thm:exact-forgetting-identity}. For each \(t\), we have \(S_\Pi^{k-t}(C_t)=\sum_{r=1}^R \rho_r^{\,k-t}\mathcal Q_r(C_t)\) and \(S_\Pi^{t-1}(X_\star)=\sum_{s=1}^R \rho_s^{\,t-1}\mathcal Q_s(X_\star)\). Substituting these expansions into the trace and using bilinearity gives
\[
\mathbb E\tr\!\Big(S_\Pi^{k-t}(C_t)\,P_t\,S_\Pi^{t-1}(X_\star)\,P_t\Big)
=
\sum_{r,s=1}^R \rho_r^{\,k-t}\rho_s^{\,t-1}\,\mathbb E\tr\!\big(\mathcal Q_r(C_t)\,P_t\,\mathcal Q_s(X_\star)\,P_t\big).
\]
By definition, the expectation on the right is \(\beta_{rs}\), which does not depend on \(t\) because the task sequence is i.i.d. Summing over \(t=1,\dots,k-1\) and dividing by \(k\) yields
\[
F^\Pi(k)=\frac1k\sum_{r,s=1}^R \beta_{rs}\sum_{t=1}^{k-1}\rho_r^{\,k-t}\rho_s^{\,t-1}
=
\frac1k\sum_{r,s=1}^R \beta_{rs}\,\Gamma_{rs}(k).
\]

It remains to evaluate \(\Gamma_{rs}(k)\). If \(r=s\), then each term in the sum equals \(\rho_r^{\,k-1}\), so \(\Gamma_{rs}(k)=(k-1)\rho_r^{\,k-1}\). If \(r\neq s\), then \(\Gamma_{rs}(k)\) is a finite geometric sum with ratio \(\rho_s/\rho_r\), hence \(\Gamma_{rs}(k)=\frac{\rho_r^{\,k-1}-\rho_s^{\,k-1}}{\rho_r-\rho_s}\). This is exactly the claimed expansion. A full proof is given in Appendix~\ref{app:proof-exact-spectral-expansion}.

\section{Convergence Rates and Their Interpretation}\label{sec:rates}

In the previous section, we reduced forgetting to an exact operator-spectral expansion. We begin by using this expansion to identify the spectral contribution governing the leading decay behavior of \(F^\Pi(k)\). We then ask whether one can obtain unconditional two-sided bounds. The answer is asymmetric: an unconditional upper bound is available, whereas a corresponding unconditional positive lower bound is impossible in general.

To identify the asymptotically leading contribution, define
\[
\rho_\Pi:=\|S_\Pi\|_{\mathrm{op},F},
\]
and let \(\mathcal E_\Pi:=\{Y\in\mathbb S^d:S_\Pi(Y)=\rho_\Pi Y\}\) denote the top eigenspace of \(S_\Pi\). Let \(\mathcal Q_\Pi:\mathbb S^d\to\mathcal E_\Pi\) be the Frobenius-orthogonal projector onto \(\mathcal E_\Pi\), let \(\eta_\Pi:=\|S_\Pi|_{\mathcal E_\Pi^\perp}\|_{\mathrm{op},F}\), and define the top-level activation coefficient by \(c_\Pi^{\mathrm{top}}(w^\star):=\mathbb E\tr\!\big(\mathcal Q_\Pi(C_t)\,P_t\,\mathcal Q_\Pi(X_\star)\,P_t\big)\).

\begin{theorem}[Asymptotic characterization of forgetting]\label{thm:leading-term}
Assume that \(\eta_\Pi<\rho_\Pi\). Then, as \(k\to\infty\),
\[
F^\Pi(k)=c_\Pi^{\mathrm{top}}(w^\star)\rho_\Pi^{\,k-1}+O\!\left(\frac{\rho_\Pi^{\,k-1}}{k}+\eta_\Pi^{\,k-1}\right).
\]
\end{theorem}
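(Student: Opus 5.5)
The plan is to read off the asymptotics directly from the exact spectral expansion of Theorem~\ref{thm:exact-spectral-expansion}, after a few structural facts about \(S_\Pi\).

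First, \(S_\Pi\) is self-adjoint and positive semidefinite on \((\mathbb S^d,\langle\cdot,\cdot\rangle_F)\). Self-adjointness follows from
\[
\langle B,S_\Pi(A)\rangle_F=\mathbb E\tr(BP_tAP_t)=\langle S_\Pi(B),A\rangle_F
\]
by cyclicity of the trace. Positivity follows from \(\langle A,S_\Pi(A)\rangle_F=\mathbb E\|P_tAP_t\|_F^2\ge0\). Hence every spectral level satisfies \(\rho_r\ge 0\), the orthogonal eigen-decomposition used in Theorem~\ref{thm:exact-spectral-expansion} exists, and \(\rho_\Pi=\|S_\Pi\|_{\mathrm{op},F}\) is the largest level. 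Index it as \(r=1\), so \(\mathcal E_1=\mathcal E_\Pi\) and \(\mathcal Q_1=\mathcal Q_\Pi\). All other levels satisfy \(0\le\rho_r\le\eta_\Pi<\rho_\Pi\) for \(r\ge2\). By definition, \(\beta_{11}=c_\Pi^{\mathrm{top}}(w^\star)\).

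Every \(\beta_{rs}\) is finite. Since \(\mathcal Q_r\) is an orthogonal projector and \(\|P_t\|_{\mathrm{op}}\le1\), we have \(|\beta_{rs}|\le\mathbb E\|C_t\|_F\,\|X_\star\|_F<\infty\). The edge case \(\rho_\Pi=0\) is excluded, because then \(\eta_\Pi<\rho_\Pi\) fails.

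Next, split the double sum \(\frac1k\sum_{r,s}\beta_{rs}\Gamma_{rs}(k)\) into three groups.

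\emph{The top diagonal term.} This is
\[
\frac{k-1}{k}\,\beta_{11}\rho_\Pi^{k-1}=c_\Pi^{\mathrm{top}}(w^\star)\rho_\Pi^{k-1}-\frac{\beta_{11}\rho_\Pi^{k-1}}{k},
\]
which gives the leading term plus an \(O(\rho_\Pi^{k-1}/k)\) error.

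\emph{Mixed terms, with exactly one of \(r,s\) equal to \(1\).} Say \(s\ne1\). Then \(\Gamma_{1s}(k)=(\rho_\Pi^{k-1}-\rho_s^{k-1})/(\rho_\Pi-\rho_s)\). Since \(0\le\rho_s\le\eta_\Pi\), we get \(|\Gamma_{1s}(k)|\le\rho_\Pi^{k-1}/(\rho_\Pi-\eta_\Pi)\). After dividing by \(k\), these terms are \(O(\rho_\Pi^{k-1}/k)\), with a constant depending on the spectral gap.

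\emph{Terms with \(r,s\ge2\).} Use the sum form \(\Gamma_{rs}(k)=\sum_{t=1}^{k-1}\rho_r^{k-t}\rho_s^{t-1}\) rather than the closed form, which avoids small denominators between close levels. Each summand is at most \(\eta_\Pi^{k-1}\), so \(\Gamma_{rs}(k)\le(k-1)\eta_\Pi^{k-1}\). After dividing by \(k\), these terms are \(O(\eta_\Pi^{k-1})\).

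Since \(R\) is finite, summing the three groups yields the claim. The only step needing care is the mixed-term bound. There one must ensure the constant is uniform in \(k\), which the gap \(\rho_\Pi-\eta_\Pi>0\) provides. One must also ensure that nonnegativity of the levels excludes sign cancellations or negative levels of modulus near \(\rho_\Pi\); this is where the positive semidefiniteness established at the start is essential.
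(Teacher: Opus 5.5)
Your proposal is correct and follows the paper's proof essentially verbatim: expand via Theorem~\ref{thm:exact-spectral-expansion} and bound the top--top, mixed, and lower--lower blocks separately. Your additions fill in details the paper leaves implicit, namely nonnegativity of the levels from positive semidefiniteness of \(S_\Pi\), finiteness of \(\beta_{rs}\), and the sum form for the lower--lower block.

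One small remark. The geometric sum actually evaluates to \(\Gamma_{rs}(k)=\rho_r\,\frac{\rho_r^{\,k-1}-\rho_s^{\,k-1}}{\rho_r-\rho_s}\) for \(r\neq s\), so the quoted closed form drops a factor \(\rho_r\). Since \(\rho_\Pi\le 1\), your mixed-term bound \(\rho_\Pi^{\,k-1}/(\rho_\Pi-\eta_\Pi)\) still holds for both \(\Gamma_{1s}\) and \(\Gamma_{s1}\), and the conclusion is unaffected.
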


\paragraph{Proof sketch.}
Apply Theorem~\ref{thm:exact-spectral-expansion}. Let \(\rho_1,\dots,\rho_R\) be the distinct spectral levels of \(S_\Pi\), ordered so that \(\rho_1=\rho_\Pi>\rho_2\ge\cdots\ge \rho_R\ge 0\), and let \(\mathcal Q_1,\dots,\mathcal Q_R\) be the corresponding orthogonal spectral projectors. Then \(\mathcal Q_1=\mathcal Q_\Pi\), \(\rho_1=\rho_\Pi\), and \(\max_{r\ge 2}\rho_r=\eta_\Pi\). The exact spectral expansion gives
\[
F^\Pi(k)=\frac1k\sum_{r,s=1}^R \beta_{rs}\,\Gamma_{rs}(k),
\]
where \(\beta_{rs}:=\mathbb E\tr\!\big(\mathcal Q_r(C_t)\,P_t\,\mathcal Q_s(X_\star)\,P_t\big)\) and \(\Gamma_{rs}(k):=\sum_{t=1}^{k-1}\rho_r^{\,k-t}\rho_s^{\,t-1}\).

The \((r,s)=(1,1)\) term is
\[
\frac1k\,\beta_{11}\,\Gamma_{11}(k)=\frac1k\,\beta_{11}\,(k-1)\rho_\Pi^{\,k-1}
=\beta_{11}\rho_\Pi^{\,k-1}+O\!\left(\frac{\rho_\Pi^{\,k-1}}{k}\right).
\]
Since \(\mathcal Q_1=\mathcal Q_\Pi\), we have \(\beta_{11}=c_\Pi^{\mathrm{top}}(w^\star)\).

If \(s\ge 2\), then
\[
\Gamma_{1s}(k)=\frac{\rho_\Pi^{\,k-1}-\rho_s^{\,k-1}}{\rho_\Pi-\rho_s},
\]
so, because \(\rho_s\le \eta_\Pi<\rho_\Pi\), we have \(\Gamma_{1s}(k)=O(\rho_\Pi^{\,k-1})\). After multiplication by the prefactor \(1/k\), this contributes \(O(\rho_\Pi^{\,k-1}/k)\). The same bound holds for \(\Gamma_{r1}(k)\) with \(r\ge 2\).

If \(r,s\ge 2\), then \(\rho_r,\rho_s\le \eta_\Pi\). Hence \(\Gamma_{rs}(k)=O(k\eta_\Pi^{\,k-1})\), and after multiplication by \(1/k\) these lower-lower terms contribute \(O(\eta_\Pi^{\,k-1})\).

Combining the top-top, top-lower, lower-top, and lower-lower contributions gives
\[
F^\Pi(k)=c_\Pi^{\mathrm{top}}(w^\star)\rho_\Pi^{\,k-1}+O\!\left(\frac{\rho_\Pi^{\,k-1}}{k}+\eta_\Pi^{\,k-1}\right),
\]
as claimed. A full proof is given in Appendix~\ref{app:proof-leading-term}.

To understand when the leading coefficient vanishes, we view it as a function of the target direction. For a general vector \(u\in\mathbb R^d\), define \(X_u:=uu^\top\) and
\[
c_\Pi^{\mathrm{top}}(u):=\mathbb E\tr\!\big(\mathcal Q_\Pi(C_t)\,P_t\,\mathcal Q_\Pi(X_u)\,P_t\big).
\]
Since \(X_u=uu^\top\) depends quadratically on \(u\), while \(\mathcal Q_\Pi\), the trace, and the expectation are all linear operations, the map \(u\mapsto c_\Pi^{\mathrm{top}}(u)\) is a quadratic form on \(\mathbb R^d\). Therefore there exists a symmetric matrix \(M_\Pi^{\mathrm{top}}\in\mathbb S^d\) such that
\[
c_\Pi^{\mathrm{top}}(u)=u^\top M_\Pi^{\mathrm{top}}u
\qquad\text{for every }u\in\mathbb R^d.
\]
Moreover, the leading-term theorem applied with \(u\) in place of \(w^\star\) shows that \(c_\Pi^{\mathrm{top}}(u)\ge 0\) for every \(u\): otherwise the asymptotic expansion would eventually force the nonnegative quantity \(F^\Pi(k)\) to become negative. Hence \(M_\Pi^{\mathrm{top}}\succeq 0\). Consequently, either \(c_\Pi^{\mathrm{top}}(\cdot)\equiv 0\), or its zero set
\[
\{u\in\mathbb R^d:c_\Pi^{\mathrm{top}}(u)=0\}
\]
is a proper quadratic variety and therefore has Lebesgue measure zero. In particular, if \(w^\star\) is drawn from any absolutely continuous distribution, then \(c_\Pi^{\mathrm{top}}(w^\star)>0\) almost surely unless \(c_\Pi^{\mathrm{top}}(\cdot)\equiv 0\).

Theorem~\ref{thm:leading-term} identifies the top-level coefficient \(c_\Pi^{\mathrm{top}}(w^\star)\) in the exact asymptotic expansion. The discussion above shows that this coefficient is always nonnegative, and that its vanishing is a structured degeneracy. Thus, in the generic case \(c_\Pi^{\mathrm{top}}(w^\star)>0\), the top eigenspace is genuinely visible in the loss. If in addition \(\rho_\Pi<1\), then Theorem~\ref{thm:leading-term} yields
\[
F^\Pi(k)\asymp \rho_\Pi^{\,k-1},
\]
so the forgetting loss decays exponentially at the spectral rate \(\rho_\Pi\).

If \(c_\Pi^{\mathrm{top}}(w^\star)=0\), then the diagonal top-eigenspace contribution vanishes, and one returns to the exact spectral expansion \(F^\Pi(k)=\frac1k\sum_{r,s=1}^R \beta_{rs}\Gamma_{rs}(k)\) to identify the next surviving term. In that expansion, any mixed top-lower term involving a lower spectral level \(\lambda<\rho_\Pi\) has size \(O(\rho_\Pi^{\,k-1}/k)\), while every lower-lower block is exponentially smaller, of size \(O(\hat\rho^{\,k-1})\) for some \(\hat\rho<\rho_\Pi\). Thus vanishing of \(c_\Pi^{\mathrm{top}}(w^\star)\) pushes the leading contribution from the diagonal top-top block down to either a mixed top-lower term or a smaller spectral scale.

The boundary case \(\rho_\Pi=1\) should be read separately. If coefficients touching the \(\rho_\Pi=1\) eigenspace are present, then there is no exponential decay at all at that level. If the diagonal top coefficient vanishes but mixed top-level coefficients remain, the exact expansion recovers the \(O(1/k)\) conclusion of \citet{evron2022catastrophic}. If all coefficients touching the \(\rho_\Pi=1\) eigenspace vanish, then the decay is in fact sharper, namely \(F^\Pi(k)=O(\hat\rho^{\,k-1})\) for some \(\hat\rho<1\). Hence the exact-expansion framework subsumes the asymptotic conclusion of \citet{evron2022catastrophic} and is generically stronger away from the \(\rho_\Pi=1\) boundary.
The nonnegativity and generic-positivity statement for \(c_\Pi^{\mathrm{top}}(\cdot)\) is formalized in Appendix~\ref{app:proof-top-coefficient}.

\begin{theorem}[Data-dependent exponential upper bound]\label{thm:upper-bound}
For every \(k\ge 2\),
\[
F^\Pi(k)\le \frac{k-1}{k}\,\rho_\Pi^{\,k-1}\,\|X_\star\|_F\,\mathbb E\|C_t\|_F.
\]
\end{theorem}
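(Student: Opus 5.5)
We start from the exact forgetting identity of Theorem~\ref{thm:exact-forgetting-identity}, which writes \(F^\Pi(k)\) as \(\frac1k\) times a sum of \(k-1\) terms. The plan is to bound each term \(\mathbb E\tr\big(S_\Pi^{k-t}(C_t)\,P_t\,S_\Pi^{t-1}(X_\star)\,P_t\big)\) by \(\rho_\Pi^{k-1}\|X_\star\|_F\,\mathbb E\|C_t\|_F\), uniformly in \(t\). Summing the \(k-1\) identical bounds and dividing by \(k\) then yields the stated factor \(\frac{k-1}{k}\).

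For a fixed \(t\), we condition on the task \((X_t,y_t)\). The matrix \(S_\Pi^{t-1}(X_\star)\) is deterministic and does not depend on \(t\)'s realization. Cauchy--Schwarz for the Frobenius inner product gives
\[
\tr\!\big(S_\Pi^{k-t}(C_t)\,P_tS_\Pi^{t-1}(X_\star)P_t\big)\le \|S_\Pi^{k-t}(C_t)\|_F\,\|P_tS_\Pi^{t-1}(X_\star)P_t\|_F .
\]
The map \(A\mapsto P_tAP_t\) is Frobenius-nonexpansive because \(P_t\) is an orthogonal projector: \(\|P_tAP_t\|_F\le\|P_t\|_{\mathrm{op}}^2\|A\|_F\le\|A\|_F\). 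Hence the second factor is at most \(\|S_\Pi^{t-1}(X_\star)\|_F\).

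By definition of \(\rho_\Pi=\|S_\Pi\|_{\mathrm{op},F}\) and submultiplicativity, \(\|S_\Pi^m(A)\|_F\le\rho_\Pi^m\|A\|_F\) for every \(m\ge0\). This gives \(\|S_\Pi^{k-t}(C_t)\|_F\le\rho_\Pi^{k-t}\|C_t\|_F\) and \(\|S_\Pi^{t-1}(X_\star)\|_F\le\rho_\Pi^{t-1}\|X_\star\|_F\). Multiplying, the pointwise bound is \(\rho_\Pi^{k-1}\|X_\star\|_F\|C_t\|_F\). Taking expectations gives \(\rho_\Pi^{k-1}\|X_\star\|_F\,\mathbb E\|C_t\|_F\), which is finite by the standing assumption and the same for every \(t\) since the tasks are i.i.d.

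The step needing the most care is the operator-norm bound on \(S_\Pi\) applied to \(C_t\), which is random. Since \(S_\Pi\) is a fixed deterministic linear operator on \(\mathbb S^d\), the inequality \(\|S_\Pi^{m}(C_t)\|_F\le\rho_\Pi^{m}\|C_t\|_F\) holds pointwise for each realization before taking expectations, so there is no obstacle. Equivalently, one can note that \(S_\Pi\) is self-adjoint and PSD on \(\mathbb S^d\) (as \(\langle S_\Pi A,B\rangle_F=\mathbb E\langle P_tAP_t,B\rangle_F\) is symmetric), so \(\rho_\Pi\) is its top eigenvalue. With that observation the argument is a direct chain of Cauchy--Schwarz and norm inequalities.
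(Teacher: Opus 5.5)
Your proposal is correct and matches the paper's proof essentially step for step. Starting from the exact forgetting identity, you bound each trace term by Frobenius Cauchy--Schwarz and the nonexpansiveness of \(A\mapsto P_tAP_t\), apply \(\|S_\Pi^m(A)\|_F\le\rho_\Pi^m\|A\|_F\) pointwise to both factors, and then take expectations and sum the \(k-1\) identical bounds.
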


\paragraph{Proof sketch.}
Starting from Theorem~\ref{thm:exact-forgetting-identity},
\[
F^\Pi(k)=\frac1k\sum_{t=1}^{k-1}\mathbb E\tr\!\Big(S_\Pi^{k-t}(C_t)\,P_t\,S_\Pi^{t-1}(X_\star)\,P_t\Big).
\]
For any symmetric matrices \(A,B\) and any orthogonal projector \(P\), we have \(|\tr(APBP)|\le \|A\|_F\,\|PBP\|_F\le \|A\|_F\,\|B\|_F\). Applying this with \(A=S_\Pi^{k-t}(C_t)\) and \(B=S_\Pi^{t-1}(X_\star)\) gives
\[
F^\Pi(k)\le \frac1k\sum_{t=1}^{k-1}\mathbb E\!\left[\|S_\Pi^{k-t}(C_t)\|_F\,\|S_\Pi^{t-1}(X_\star)\|_F\right].
\]
Since \(\|S_\Pi^m(A)\|_F\le \|S_\Pi\|_{\mathrm{op},F}^m\|A\|_F=\rho_\Pi^m\|A\|_F\), we obtain \(\|S_\Pi^{k-t}(C_t)\|_F\le \rho_\Pi^{\,k-t}\|C_t\|_F\) and \(\|S_\Pi^{t-1}(X_\star)\|_F\le \rho_\Pi^{\,t-1}\|X_\star\|_F\). Multiplying these bounds yields \(\|S_\Pi^{k-t}(C_t)\|_F\,\|S_\Pi^{t-1}(X_\star)\|_F\le \rho_\Pi^{\,k-1}\|C_t\|_F\|X_\star\|_F\). Hence
\[
F^\Pi(k)\le \frac1k\sum_{t=1}^{k-1}\rho_\Pi^{\,k-1}\|X_\star\|_F\,\mathbb E\|C_t\|_F.
\]
Because the task sequence is i.i.d., \(\mathbb E\|C_t\|_F\) does not depend on \(t\). Summing over \(t=1,\dots,k-1\) gives
\[
F^\Pi(k)\le \frac{k-1}{k}\,\rho_\Pi^{\,k-1}\,\|X_\star\|_F\,\mathbb E\|C_t\|_F,
\]
as claimed. A full proof is given in Appendix~\ref{app:proof-upper-bound}.

Theorem~\ref{thm:upper-bound} shows that the exact forgetting identity can be controlled at a single exponential scale. By definition, \(S_\Pi(A)=\mathbb E[P_tAP_t]\) averages one exact-fit projection step at the matrix level, so \(\rho_\Pi=\|S_\Pi\|_{\mathrm{op},F}\) is the largest component that can survive such a step, and hence the natural spectral scale for forgetting. The remaining factors in the upper bound, \(\mathbb E\|C_t\|_F\) and \(\|X_\star\|_F=\|w^\star\|_2^2\), enter as multiplicative prefactors coming from the visible task scale and the size of the shared target. Combined with Theorem~\ref{thm:leading-term}, this shows that in the generic nondegenerate case where \(c_\Pi^{\mathrm{top}}(w^\star)>0\) and \(\rho_\Pi<1\), the upper bound and the true asymptotic decay match up to constants. These constants encode task geometry and target activation, whereas the exponential rate itself is governed by \(\rho_\Pi\). This is why \(\rho_\Pi\) becomes the main object of study in what follows.

To interpret the rate-controlling quantity \(\rho_\Pi\), it is useful to measure how much a nonzero symmetric matrix \(A\) is preserved, on average, after one random task update. We therefore define its normalized second-order invisibility score by
\[
\mathcal I_\Pi(A):=\frac{\mathbb E\|P_tAP_t\|_F^2}{\|A\|_F^2}.
\]

\begin{proposition}[Variational characterization of \(\rho_\Pi\)]\label{prop:variational-rho}
For every nonzero \(A\in\mathbb S^d\), one has \(0\le \mathcal I_\Pi(A)\le 1\). Moreover,
\[
\rho_\Pi=\max_{A\in\mathbb S^d,\;A\neq 0}\mathcal I_\Pi(A).
\]
If \(A=UU^\top\) for some \(U\in\mathbb R^{d\times m}\), then
\[
\mathcal I_\Pi(UU^\top)=\frac{\mathbb E\|U^\top P_tU\|_F^2}{\|UU^\top\|_F^2}.
\]
In particular, if \(E\subset\mathbb R^d\) is an \(m\)-dimensional subspace and \(\Pi_E\) is the orthogonal projector onto \(E\), then
\[
\mathcal I_\Pi(\Pi_E)=\frac1m\,\mathbb E\|\Pi_E P_t\Pi_E\|_F^2
=\frac1m\,\mathbb E\sum_{j=1}^m \cos^4\!\big(\theta_j(E,\Null(X_t))\big),
\]
where \(\theta_1(E,\Null(X_t)),\dots,\theta_m(E,\Null(X_t))\) are the principal angles between \(E\) and \(\Null(X_t)\).
\end{proposition}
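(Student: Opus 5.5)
The plan is to show that \(\mathcal I_\Pi\) is exactly the Rayleigh quotient of \(S_\Pi\) with respect to the Frobenius inner product, and that \(S_\Pi\) is self-adjoint and positive semidefinite on \(\mathbb S^d\). The variational formula then follows from the spectral theorem, and the remaining identities are direct trace computations.

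\emph{Step 1: self-adjointness and positivity.} For \(A,B\in\mathbb S^d\), cyclicity of the trace and linearity of expectation give
\[
\langle B,S_\Pi(A)\rangle_F=\mathbb E\tr(BP_tAP_t)=\mathbb E\tr(P_tBP_tA)=\langle S_\Pi(B),A\rangle_F,
\]
so \(S_\Pi\) is self-adjoint. Since \(P_t\) is a symmetric idempotent,
\[
\tr(AP_tAP_t)=\tr\big((P_tAP_t)(P_tAP_t)\big)=\|P_tAP_t\|_F^2 .
\]
Hence
\[
\langle A,S_\Pi(A)\rangle_F=\mathbb E\|P_tAP_t\|_F^2\ge 0,
\]
which shows \(S_\Pi\succeq 0\) and \(\mathcal I_\Pi(A)=\langle A,S_\Pi(A)\rangle_F/\|A\|_F^2\ge 0\).

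\emph{Step 2: the upper bound and the variational formula.} For the upper bound, \(\|P_tAP_t\|_F\le\|P_t\|_{\mathrm{op}}\|A\|_F\|P_t\|_{\mathrm{op}}\le\|A\|_F\) because \(\|P_t\|_{\mathrm{op}}\le 1\), so \(\mathcal I_\Pi(A)\le 1\). Since \(\mathbb S^d\) is finite-dimensional and \(S_\Pi\) is self-adjoint and PSD, its operator norm equals its largest eigenvalue. By Courant--Fischer, this eigenvalue equals the maximum of the Rayleigh quotient over nonzero \(A\). The maximum is attained at any top eigenvector, or alternatively by compactness of the Frobenius unit sphere together with continuity of the quadratic form. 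This gives \(\rho_\Pi=\max_{A\neq0}\mathcal I_\Pi(A)\).

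\emph{Step 3: the formula for \(A=UU^\top\).} Using cyclicity of the trace,
\[
\|P_tUU^\top P_t\|_F^2=\tr\big(P_tUU^\top P_tUU^\top\big)=\tr\big((U^\top P_tU)(U^\top P_tU)\big)=\|U^\top P_tU\|_F^2 .
\]
Taking expectations and dividing by \(\|UU^\top\|_F^2\) gives the stated expression for \(\mathcal I_\Pi(UU^\top)\).

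\emph{Step 4: the subspace case.} Take \(U\) to have orthonormal columns spanning \(E\), so that \(\Pi_E=UU^\top\) and \(\|\Pi_E\|_F^2=m\). Since \(U^\top U=I\),
\[
\|\Pi_EP_t\Pi_E\|_F^2=\|U\,(U^\top P_tU)\,U^\top\|_F^2=\|U^\top P_tU\|_F^2 .
\]
Now let \(V\) have orthonormal columns spanning \(\Null(X_t)\), so \(P_t=VV^\top\). Then \(U^\top P_tU=(U^\top V)(U^\top V)^\top\). By definition, the singular values of \(U^\top V\) are the cosines \(\cos\theta_j(E,\Null(X_t))\), padded with zeros up to \(m\) values when \(\dim\Null(X_t)<m\). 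The eigenvalues of \(U^\top P_tU\) are therefore \(\cos^2\theta_j\), and the squared Frobenius norm is \(\sum_{j=1}^m\cos^4\theta_j\). Taking expectations and dividing by \(m\) gives the last display.

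The main point needing care is this final step: the principal angles must be defined and indexed consistently when \(\dim\Null(X_t)<m\), with zero cosines for the missing angles. Everything else reduces to trace cyclicity and the spectral theorem for the self-adjoint operator \(S_\Pi\).
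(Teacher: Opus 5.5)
Your proposal is correct and follows essentially the same route as the paper: identify \(\mathcal I_\Pi(A)\) as the Frobenius Rayleigh quotient \(\langle A,S_\Pi(A)\rangle_F/\|A\|_F^2\) of the self-adjoint positive semidefinite operator \(S_\Pi\), invoke Rayleigh--Ritz to get \(\rho_\Pi=\|S_\Pi\|_{\mathrm{op},F}\) as the maximum, and then use trace cyclicity and the eigenvalue identity \(\operatorname{spec}(U^\top P_tU)=\{\cos^2\theta_j\}\) for the remaining formulas. You also fill in two small points the paper's proof leaves implicit: you check that \(\|\Pi_EP_t\Pi_E\|_F=\|U^\top P_tU\|_F\), and you fix a zero-padding convention for the cosines when \(\dim\Null(X_t)<m\).
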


\paragraph{Proof sketch.}
For every nonzero \(A\in\mathbb S^d\), one has \(\mathcal I_\Pi(A)\ge 0\) because the numerator is nonnegative. Since each \(P_t\) is an orthogonal projector, \(\|P_tAP_t\|_F\le \|A\|_F\), hence \(\mathcal I_\Pi(A)\le 1\). Next, for symmetric matrices \(A,B\in\mathbb S^d\), we have \(\langle A,S_\Pi(B)\rangle_F=\mathbb E\tr(A P_t B P_t)=\mathbb E\tr(P_t A P_t B)=\langle S_\Pi(A),B\rangle_F\), so \(S_\Pi\) is self-adjoint on \(\mathbb S^d\) with respect to the Frobenius inner product. Moreover, \(\langle A,S_\Pi(A)\rangle_F=\mathbb E\tr(A P_t A P_t)=\mathbb E\tr(P_t A P_t P_t A P_t)=\mathbb E\|P_tAP_t\|_F^2\), where we used \(P_t^2=P_t\) and symmetry. Therefore \(\mathcal I_\Pi(A)=\frac{\langle A,S_\Pi(A)\rangle_F}{\|A\|_F^2}\). Since \(S_\Pi\) is self-adjoint and positive semidefinite, the Rayleigh--Ritz variational principle gives \(\max_{A\neq 0}\mathcal I_\Pi(A)=\max_{A\neq 0}\frac{\langle A,S_\Pi(A)\rangle_F}{\|A\|_F^2}=\|S_\Pi\|_{\mathrm{op},F}=\rho_\Pi\).

Now let \(A=UU^\top\) for some \(U\in\mathbb R^{d\times m}\). Then \(P_tAP_t=P_tUU^\top P_t=(P_tU)(P_tU)^\top\), so \(\|P_tAP_t\|_F^2=\tr\big((P_tUU^\top P_t)^2\big)=\tr\big((U^\top P_tU)^2\big)=\|U^\top P_tU\|_F^2\), because \(U^\top P_tU\) is symmetric. Hence \(\mathcal I_\Pi(UU^\top)=\frac{\mathbb E\|U^\top P_tU\|_F^2}{\|UU^\top\|_F^2}\).

If \(E\subset\mathbb R^d\) is an \(m\)-dimensional subspace and \(\Pi_E\) is the orthogonal projector onto \(E\), choose \(U\in\mathbb R^{d\times m}\) with orthonormal columns spanning \(E\). Then \(\Pi_E=UU^\top\) and \(\|\Pi_E\|_F^2=\tr(\Pi_E)=m\), so \(\mathcal I_\Pi(\Pi_E)=\frac{1}{m}\mathbb E\|U^\top P_tU\|_F^2\). Since \(P_t\) is the orthogonal projector onto \(\Null(X_t)\), the eigenvalues of \(U^\top P_tU\) are exactly \(\cos^2(\theta_1(E,\Null(X_t))),\dots,\cos^2(\theta_m(E,\Null(X_t)))\), where \(\theta_1,\dots,\theta_m\) are the principal angles between \(E\) and \(\Null(X_t)\). Therefore \(\|U^\top P_tU\|_F^2=\sum_{j=1}^m \cos^4(\theta_j(E,\Null(X_t)))\), which yields the stated formula.
A full proof is given in Appendix~\ref{app:proof-variational-rho}.

Proposition~\ref{prop:variational-rho} shows that \(\rho_\Pi\) measures the retention side of forgetting: it is large when there exists a nontrivial matrix pattern that is only weakly reduced by a typical task update. The subspace formula makes this more concrete: if some subspace \(E\) is typically close to the task null spaces \(\Null(X_t)\), then the corresponding principal angles are small, \(\mathcal I_\Pi(\Pi_E)\) is close to \(1\), and hence \(\rho_\Pi\) is close to \(1\). In this sense, slow forgetting is driven by directions that are hard for individual tasks to see.

\begin{remark}[Intuition]
If the projectors \(P_t\) commute, then one can choose a common orthonormal basis and analyze forgetting one direction at a time. For a direction that is preserved by a random task with probability \(p_i\), its contribution at horizon \(m\) is proportional to \((1-p_i)p_i^m\). For fixed \(m\), this quantity is \(0\) at both \(p_i=0\) and \(p_i=1\), and is maximized at \(p_i=\frac{m}{m+1}\). Thus the directions that matter most for finite-horizon forgetting are neither fully visible nor fully invisible, but those that are preserved most of the time while still being seen occasionally by the loss.
\end{remark}

\begin{remark}[Task richness, visibility, and slow identification]
A low-rank task is easy to interpolate because its affine solution set \(w^\star+\Null(X_t)\) is large. But this same slack weakens identifiability of the common solution: if tasks are too simple or too homogeneous, then many error modes remain nearly invisible, \(\rho_\Pi\) stays close to \(1\), and forgetting decays only on a long timescale. The slow decay therefore reflects slow approach to \(w^\star\), not stronger overwriting by later tasks.

This point is already visible in the isotropic fixed-rank model. If \(\Range(X_t^\top)\) is uniformly distributed among \(r\)-dimensional subspaces with fixed \(r\), and if \(A_d=u_du_d^\top\) for a unit vector \(u_d\), then \(\mathcal I_{\Pi_d}(A_d)=\mathbb E\|P_{t,d}u_d\|_2^4=\frac{(d-r)(d-r+2)}{d(d+2)}=1-\frac{2r}{d}+O(d^{-2})\). Hence \(\rho_{\Pi,d}\ge 1-\frac{2r}{d}+O(d^{-2})\), so the visible decay timescale cannot be shorter than order \(d/r\). Writing \(\rho_{\Pi,d}=1-\varepsilon_d\), one has heuristically \(F^\Pi(k)\approx \rho_{\Pi,d}^{\,k-1}\approx e^{-\varepsilon_d(k-1)}\), and in the present fixed-rank isotropic regime the bound above suggests \(\varepsilon_d\asymp r/d\). Thus \(F^\Pi(k)\approx \exp\!\left(-c\,\frac{kr}{d}\right)\) at the level of scaling heuristics. In this sense, when tasks are sufficiently complementary, progress is controlled by the cumulative visible information, namely the number of tasks times the number of visible directions per task, and one expects noticeable decay only after about \(kr\asymp d\).

Two task families can have the same rank \(r\) and the same number of tasks \(k\), yet behave very differently if one family repeatedly probes the same visible directions while the other explores complementary ones. The precise control parameter is \(k(1-\rho_{\Pi,d})\), whereas \(kr\) is only a convenient surrogate in regimes where \(1-\rho_{\Pi,d}\asymp r/d\). Thus, from a high-dimensional viewpoint, slow forgetting should not be attributed to ambient dimension alone. It appears when task complexity or diversity does not keep pace with dimension, leaving increasingly invisible high-dimensional error modes and driving \(\rho_{\Pi,d}\) toward \(1\).

By contrast, enriching the task family, either by increasing \(\rank(X_t)\) or by making the visible subspaces more diverse while keeping \(d\) fixed, typically reduces \(\rho_\Pi\) and accelerates forgetting decay, even though fitting the enlarged collection of tasks may look harder from a per-task viewpoint. The notable point here is that it already appears at the level of forgetting itself: even if one cares only about fitting previously seen tasks, richer and more complementary tasks help because they force the iterates to move toward the common solution rather than merely interpolate each task on the surface.
\end{remark}

\begin{proposition}[Commuting projector laws can have zero actual forgetting]\label{prop:commuting-zero-forgetting}
If all projectors in the support of \(\Pi\) commute, then \(F^\Pi(k)=0\) for every \(k\ge 1\).
\end{proposition}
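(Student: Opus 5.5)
The plan is to work directly at the trajectory level rather than through the spectral expansion, since the claim is pathwise. Fix a realization of tasks in the support of \(\Pi\). From the proof of Theorem~\ref{thm:exact-forgetting-identity}, \(e_k=-P_kP_{k-1}\cdots P_1w^\star\), and \(\ell_t(w_k)=\frac1{n_t}\|X_te_k\|_2^2\). The key step is that, because all projectors commute, the factor \(P_t\) can be moved to the far left of the product:
\[
e_k=-P_t\,\big(P_k\cdots P_{t+1}P_{t-1}\cdots P_1\big)w^\star .
\]
Hence \(e_k\in\Range(P_t)=\Null(X_t)\) for every \(t\le k\), so \(X_te_k=0\) and \(\ell_t(w_k)=0\) for all \(t=1,\dots,k\). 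Averaging over \(t\) and taking expectations gives \(F^\Pi(k)=0\) for every \(k\ge1\); the case \(k=1\) is immediate from exact fitting anyway.

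As a cross-check via Theorem~\ref{thm:exact-forgetting-identity}, each summand is \(\mathbb E\tr(S_\Pi^{k-t}(C_t)P_tBP_t)\). Under commutation, \(S_\Pi^{k-t}(C_t)\) is an average of \(Q C_t Q\), where \(Q\) is a product of independent projectors that commute with \(P_t\). Then \(P_tQC_tQP_t=QP_tC_tP_tQ=0\), since \(C_tP_t=\frac1{n_t}X_t^\top X_tP_t=0\). This gives zero after conditioning on task \(t\). I would present the pathwise argument as the proof and mention this only as a consistency remark.

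The main obstacle is measure-theoretic bookkeeping: "support commutes" must be used almost surely. I would note that \(P_t\in\operatorname{supp}\) with probability one, so for almost every realization all \(P_1,\dots,P_k\) pairwise commute. The pathwise identity therefore holds almost surely, which suffices for the expectation.

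Finally, I would remark why this is consistent with \(\rho_\Pi\) possibly being close to \(1\). The coefficients \(\beta_{rs}\) in Theorem~\ref{thm:exact-spectral-expansion} all vanish here, so the rate \(\rho_\Pi\) alone cannot force a positive lower bound.
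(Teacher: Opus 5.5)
Your proposal is correct and follows the paper's proof: write \(e_k=-P_k\cdots P_1w^\star\), use commutation to move \(P_t\) to the front so that \(e_k\in\Null(X_t)\), conclude \(\ell_t(w_k)=0\) pathwise for every \(t\le k\), and then average. The almost-sure bookkeeping and the cross-check through Theorem~\ref{thm:exact-forgetting-identity} are valid additions that the proof does not need. Your unproved remark that all \(\beta_{rs}\) vanish is also true, because \(A\mapsto P_tAP_t\) commutes with \(S_\Pi\) and hence with each \(\mathcal Q_r\), which gives \(P_t\mathcal Q_r(C_t)P_t=\mathcal Q_r(P_tC_tP_t)=0\).
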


\paragraph{Proof sketch.}
Let \(e_t:=w_t-w^\star\). By the exact-fit dynamics, \(e_t=P_t e_{t-1}\), hence \(e_k=-P_kP_{k-1}\cdots P_1 w^\star\). Fix \(t\le k\). If the projectors commute, then \(P_t\) can be moved to the front, so \(e_k=P_t\big(-P_k\cdots P_{t+1}P_{t-1}\cdots P_1 w^\star\big)\). Therefore \(e_k\in \Range(P_t)=\Null(X_t)\), and hence \(X_t e_k=0\). Since \(y_t=X_t w^\star\), we have \(\ell_t(w_k)=\frac1{n_t}\|X_t(w_k-w^\star)\|_2^2=\frac1{n_t}\|X_t e_k\|_2^2=0\). As this holds for every \(t\le k\), it follows that \(F^\Pi(k)=\mathbb E\!\left[\frac1k\sum_{t=1}^k \ell_t(w_k)\right]=0\).
A full proof is given in Appendix~\ref{app:proof-commuting-zero-forgetting}.

Proposition~\ref{prop:commuting-zero-forgetting} shows that an unconditional positive lower bound cannot hold in general. Even when the projector law is nontrivial, commuting task families may produce no actual forgetting at all. Thus any lower bound that is meant to hold uniformly over all task distributions must fail without additional assumptions.

\section{Experiments}\label{sec:experiments}

We evaluate the theory in realizable linear synthetic settings with ambient dimension \(d=192\), task rank \(r=48\), i.i.d.\ tasks, and a shared solution \(w^\star\). Panels (a) and (b) use the shared-null-spike family, in which a parameter \(\alpha\) controls how often the task family hides the shared target direction; this lets us test, separately, whether the explicit upper bound is tight and whether the theoretical rate \(\rho_\Pi\) matches the observed decay speed. Panel (c) uses the angle-richness reservoir family and varies only the richness parameter \(L\), which controls how diverse the task family is inside \(u^\perp\) while the remaining geometry is held fixed. Across all panels, empirical forgetting is averaged over independent i.i.d.\ task sequences, and full implementation details are deferred to Appendix~\ref{app:reproducibility}.

\setlength{\textfloatsep}{7pt plus 2pt minus 2pt}
\setlength{\floatsep}{6pt plus 2pt minus 2pt}
\setlength{\abovecaptionskip}{4pt plus 1pt minus 1pt}
\setlength{\belowcaptionskip}{0pt}
\captionsetup[subfigure]{font=small,skip=2pt}
\renewcommand{\topfraction}{0.98}
\renewcommand{\bottomfraction}{0.9}
\renewcommand{\textfraction}{0.02}
\setcounter{topnumber}{2}
\setcounter{bottomnumber}{1}
\makeatletter
\setlength{\@fptop}{0pt}
\setlength{\@fpsep}{8pt plus 1fil}
\setlength{\@fpbot}{0pt plus 1fil}
\makeatother

\begin{figure}[!tb]
  \centering
  \begin{subfigure}[t]{0.49\linewidth}
    \centering
    \small\textbf{(a) Bound tightness.}\phantomcaption\label{fig:experiment-bound-compare}\par\vspace{0.2em}
    \includegraphics[width=\linewidth]{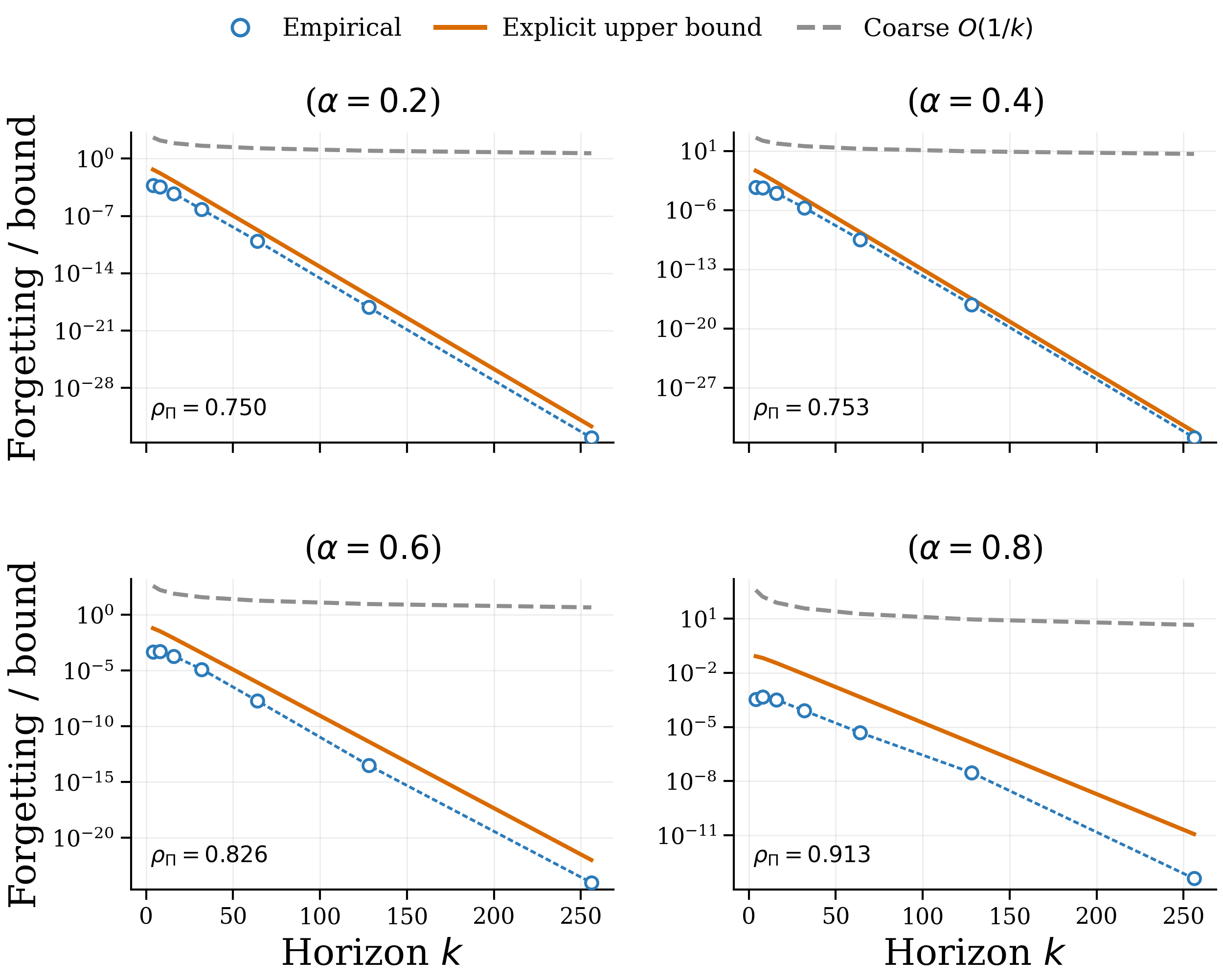}
  \end{subfigure}\hfill
  \begin{subfigure}[t]{0.49\linewidth}
    \centering
    \small\textbf{(b) Rate comparison.}\phantomcaption\label{fig:experiment-rho-comparison}\par\vspace{0.2em}
    \includegraphics[width=\linewidth]{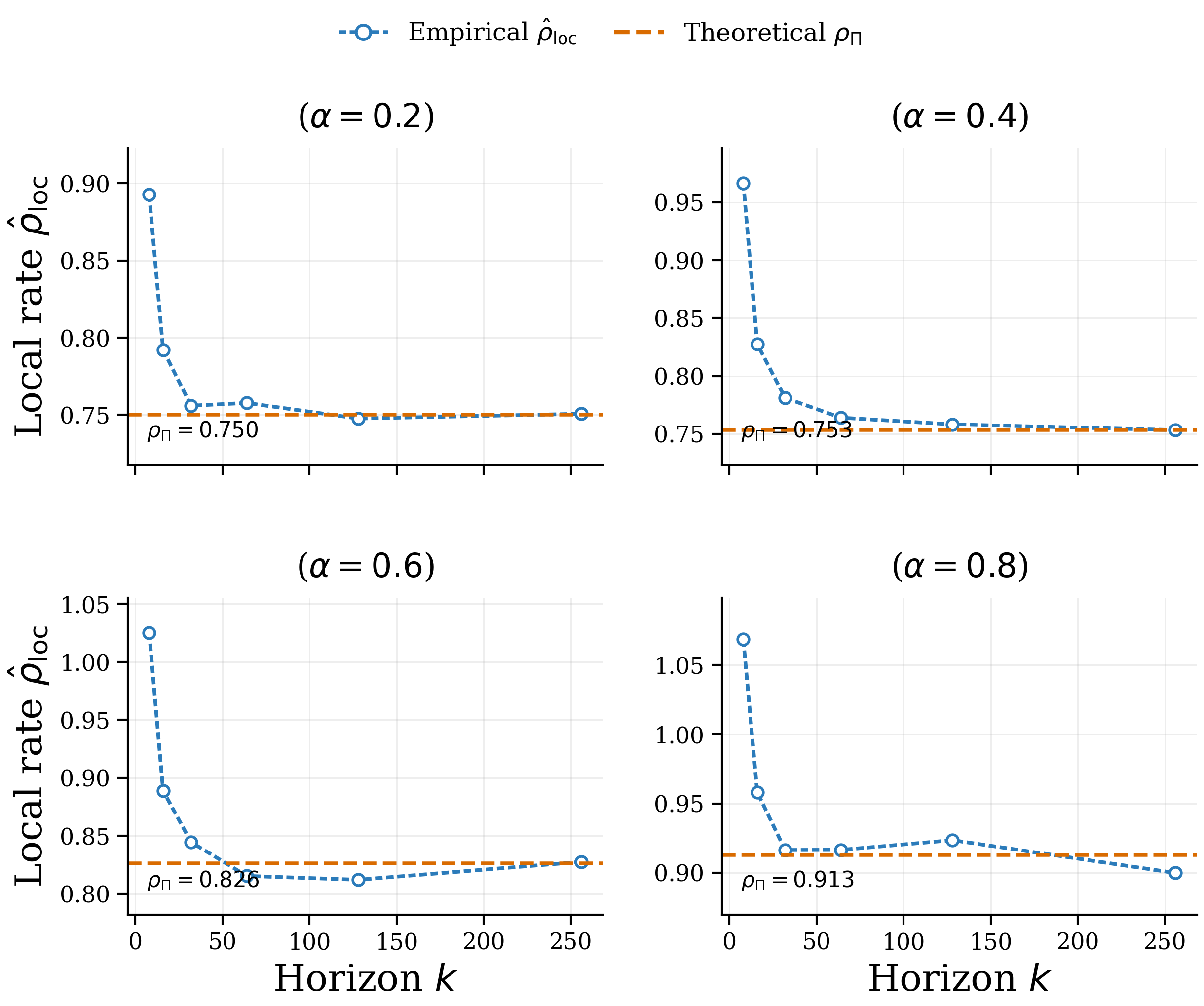}
  \end{subfigure}

  \vspace{0.35em}

  \begin{subfigure}[t]{\linewidth}
    \centering
    \small\textbf{(c) Task-family richness.}\phantomcaption\label{fig:experiment-richness}\par\vspace{0.2em}
    \includegraphics[width=0.78\linewidth]{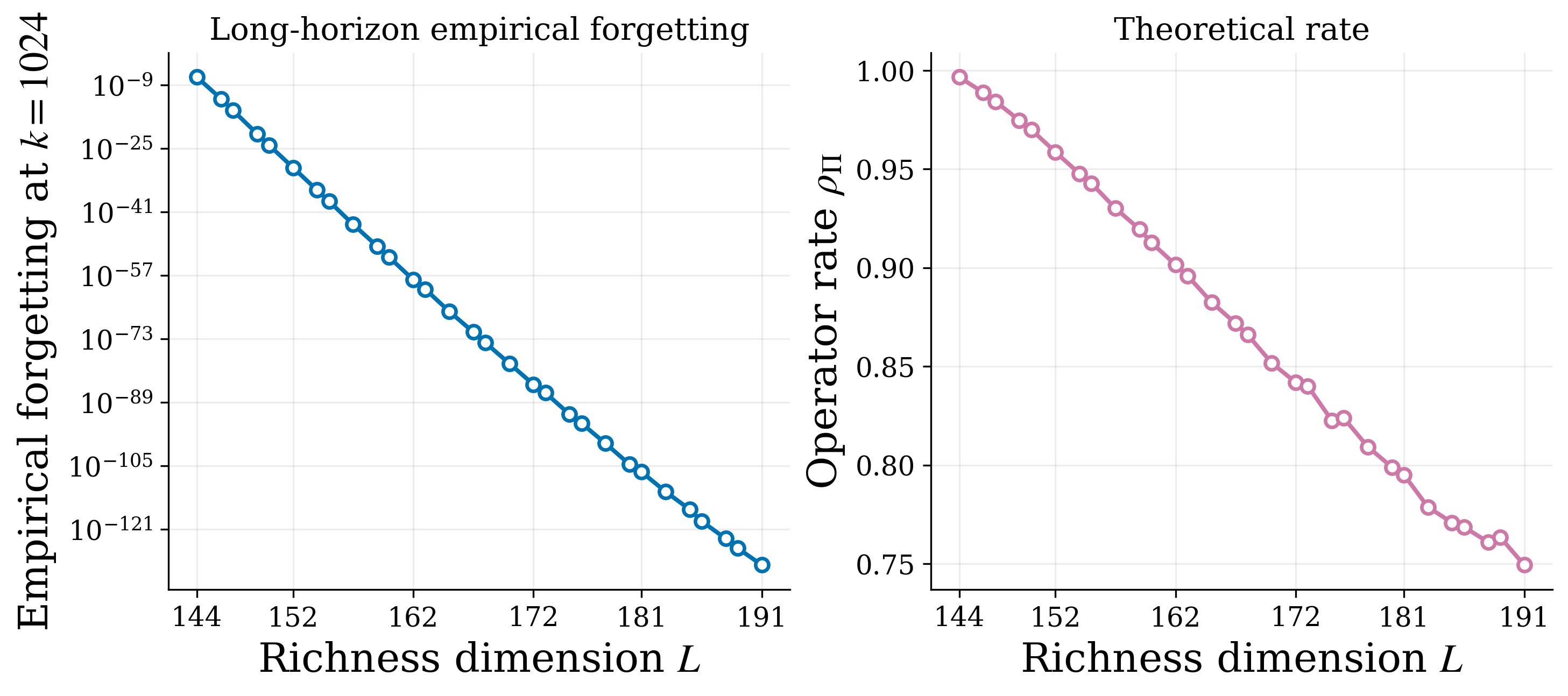}
  \end{subfigure}
  \caption{Synthetic experiments in the realizable exact-fit setting. Panel (a) compares empirical forgetting with the explicit upper bound and a coarse projector-based \(O(1/k)\) baseline in the shared-null-spike family. Panel (b) compares the empirical local decay rate with the analytic \(\rho_\Pi\) in the same family. Panel (c) shows the effect of task-family richness in the angle-richness reservoir family using a dense \(30\)-point sweep of \(L\): larger richness \(L\) decreases \(\rho_\Pi\) and lowers long-horizon empirical forgetting.}
  \label{fig:experiments-overview}
\end{figure}

\Cref{fig:experiment-bound-compare} shows that the explicit bound already tracks the empirical forgetting curve on the correct scale and is far more informative than the coarse projector-based \(O(1/k)\) baseline. \Cref{fig:experiment-rho-comparison} then tests the rate prediction directly by comparing the analytic \(\rho_\Pi\) with empirical local decay rates extracted from adjacent horizons, and the two are consistently close across the four regimes. Finally, \Cref{fig:experiment-richness} isolates the effect of task-family richness with matched dense sweeps on both the theoretical and empirical sides: as \(L\) increases, the family becomes more diverse, the theoretical rate \(\rho_\Pi\) drops, and the long-horizon empirical forgetting falls in parallel. Taken together, the three panels support the main prediction of the theory: the framework captures both the scale of forgetting and its dominant decay rate, while also explaining how lack of diversity in the task distribution slows forgetting.

\FloatBarrier

\section{Conclusion}\label{sec:conclusion}
We developed an exact loss-level spectral theory of forgetting in overparameterized linear regression with i.i.d.\ tasks drawn from a distribution \(\Pi\). The theory yields an unconditional exponential upper bound, an exact characterization of the leading asymptotic term, and, in generic nondegenerate regimes, sharp asymptotic rates up to constants. It also identifies the structural mechanisms behind slow forgetting: when task richness is weak relative to model dimension, \(\rho_\Pi\) approaches \(1\), pushing forgetting to longer horizons. Finally, it delineates the boundary cases in which the leading term vanishes or \(\rho_\Pi=1\), explains the reappearance of slower laws such as \(O(1/k)\), and shows that no unconditional positive lower bound can hold in full generality.

\section{Related Work}\label{sec:related-work}

Catastrophic forgetting has been widely studied across task-, domain-, and class-incremental continual learning settings; broad overviews appear in \citet{parisi2019continual,delange2021continual,wang2024survey,vandeven2022three,mai2022survey}. On the practical side, representative approaches include replay, regularization, parameter isolation or expansion, complementary-systems-style replay, and Bayesian or variational methods \citep{lopez2017gradient,aljundi2018memory,kirkpatrick2017overcoming,zenke2017synaptic,li2017learning,aljundi2017expert,mallya2018packnet,yoon2018lifelong,nguyen2018variational,benzing2022unifying,buzzega2020dark,arani2022learning,li2023if2net,peng2025loranpac,lee2024bayesian,bonnet2025bayesian}. Recent empirical analyses have also revisited the roles of training regimes, linear mode connectivity, width, and evaluation assumptions in shaping observed forgetting \citep{mirzadeh2020understanding,mirzadeh2021lmc,mirzadeh2022wide,lesort2023challenging}. Our focus is instead on the theory of forgetting in overparameterized linear models.

The closest prior line studies exact-fit or overparameterized linear regimes through projector dynamics, task order, and explicit geometric calculations. \citet{evron2022catastrophic} analyzes exact-fit continual linear regression and derives worst-case, cyclic, and random-order guarantees; subsequent work extends this direction to forgetting-generalization tradeoffs, random orthogonal task transforms, linear classification, sharper cyclic bounds, task similarity versus overparameterization, and newer questions about task ordering, replay, and regularization schedules \citep{lin2023theory,goldfarb2023orthogonal,evron2023classification,jeong2023cyclic,goldfarb2024joint,jung2025classification,goldfarb2025overparameterization,tsipory2025greedy,mahdaviyeh2025replay,levinstein2025optimal,karpel2026l2}. Relative to this order-centric literature, our paper asks a different question: for i.i.d.\ tasks drawn from a distribution \(\Pi\), how does the task distribution itself determine forgetting?

Complementary theoretical analyses study forgetting through task similarity, overlap, teacher--student dynamics, statistical mechanics, replay or sketching, and regularization-based continual updates \citep{doan2021theoretical,lee2021continual,asanuma2021statistical,heckel2022provable,hiratani2024tasksimilarity,ding2024understanding,zhao2024statistical,banayeeanzade2024theoretical,cai2024lastiterate,alvarez2025evolving,li2025taskorder,jung2025memory}. These works substantially broaden the modeling picture beyond the exact-fit consistent setting considered here, including noisy, underparameterized, replay-based, memory-constrained, or more general evolving-task regimes, but they do not yield the distribution-level operator identity and spectral characterization developed in this paper.

Our setting is also related to projection methods and randomized Kaczmarz dynamics, where convergence is governed by products of projections or randomized row actions \citep{gina2018method,strohmer2009randomized,needell2010randomized,needell2014paved,oswald2015convergence}; rank-one updates are further connected to normalized least-mean-square filtering \citep{haykin2002adaptive,sankaran2000convergence}. More broadly, recent work has explored statistical-physics, PDE-style, low-rank, and information-theoretic perspectives on forgetting \citep{mori2025optimal,yang2025parabolic,steele2026subspace,cheng2026context}. Our contribution is to show that, in the exact-fit linear regime, forgetting can be formulated as a distribution-governed quantity with an exact loss-level operator characterization.


\bibliographystyle{plainnat}
\bibliography{references}

\clearpage
\appendix
\section{Preliminaries}\label{app:preliminaries}

This supplementary material is organized into five parts. Appendix~\ref{app:preliminaries} collects the background facts used repeatedly in the proofs. Appendix~\ref{app:full-proofs} gives complete proofs of the main results in the paper. Appendix~\ref{app:additional-results} records additional results that are closely related to the current paper, especially the projection-based line of analysis and its limits. Appendix~\ref{app:reproducibility} gives a fully specified description of the synthetic experiments in the main text. Appendix~\ref{app:numerical-estimation} summarizes the numerical procedures used to estimate the main distribution-level quantities and their computational complexity.

For navigation, the correspondence with the main text is as follows: Theorem~\ref{thm:exact-forgetting-identity} is expanded in Appendix~\ref{app:full-proofs}, Theorem~\ref{thm:exact-spectral-expansion} and Theorem~\ref{thm:leading-term} are proved there next, Theorem~\ref{thm:upper-bound} and Proposition~\ref{prop:variational-rho} are also expanded there, Appendix~\ref{app:additional-results} develops the projection-based line beyond the main text, Appendix~\ref{app:reproducibility} specifies the exact synthetic protocol behind Figures~\ref{fig:experiment-bound-compare},~\ref{fig:experiment-rho-comparison}, and~\ref{fig:experiment-richness}, and Appendix~\ref{app:numerical-estimation} records the general numerical estimation procedures used for the key quantities in the paper.

\subsection{Exact-fit projection update}

\begin{lemma}[Exact-fit update under a common solution]\label{lem:appendix-exact-fit}
Let \((X,y)\) be a regression task with \(y=Xw^\star\), and let \(P:=I-X^+X\). If
\[
\mathcal S_{X,y}(u):=\arg\min_{w\in\R^d}\frac12\|w-u\|_2^2
\qquad \text{subject to}\qquad
Xw=y,
\]
then
\[
\mathcal S_{X,y}(u)=u+X^+(y-Xu)=w^\star+P(u-w^\star).
\]
\end{lemma}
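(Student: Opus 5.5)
The plan is to treat this as a constrained least-squares (minimum-distance) problem and solve it in two stages. First, I will derive the formula \(\mathcal S_{X,y}(u)=u+X^+(y-Xu)\) for any consistent system. Then I will substitute \(y=Xw^\star\) to get the projector form.

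\textbf{Stage 1.} Change variables to \(v:=w-u\). The problem becomes minimizing \(\frac12\|v\|_2^2\) subject to \(Xv=y-Xu\). This system is consistent: \(w=w^\star\) is feasible, so \(y-Xu=X(w^\star-u)\in\Range(X)\). Hence the feasible set for \(v\) is the nonempty affine subspace \(v_0+\Null(X)\) for any particular solution \(v_0\). Since \(\frac12\|v\|_2^2\) is strictly convex and coercive, there is a unique minimizer. It is characterized as the unique feasible \(v\) that is orthogonal to \(\Null(X)\), that is, \(v\in\Null(X)^\perp=\Range(X^\top)\).

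I will check that \(v^\ast:=X^+(y-Xu)\) has both properties.
\begin{itemize}
\item \emph{Orthogonality.} \(\Range(X^+)=\Range(X^\top)\) is a standard property of the Moore--Penrose pseudoinverse, so \(v^\ast\in\Range(X^\top)\).
\item \emph{Feasibility.} \(XX^+\) is the orthogonal projector onto \(\Range(X)\), and \(y-Xu\in\Range(X)\). Therefore \(Xv^\ast=XX^+(y-Xu)=y-Xu\).
\end{itemize}
By uniqueness, \(w=u+X^+(y-Xu)\) is the minimizer.

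\textbf{Stage 2.} Substitute \(y=Xw^\star\):
\[
u+X^+X(w^\star-u)=w^\star+(I-X^+X)(u-w^\star)=w^\star+P(u-w^\star).
\]
For completeness, I will note that \(X^+X\) is the orthogonal projector onto \(\Range(X^\top)\). Consequently \(P=I-X^+X\) is the orthogonal projector onto \(\Null(X)\), which is the fact used in the main text.

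\textbf{Main obstacle.} The only real work is the optimality characterization: the closest point of an affine subspace to the origin is its element orthogonal to the direction space. I would prove this with a Pythagorean argument. For any feasible \(v=v^\ast+z\) with \(z\in\Null(X)\), we have
\[
\|v\|^2=\|v^\ast\|^2+\|z\|^2,
\]
because \(v^\ast\perp z\). This gives both optimality and uniqueness, since equality forces \(z=0\).
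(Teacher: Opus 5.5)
Your proof is correct and follows the paper's route: substitute $z=w-u$, identify the minimizer as the minimum-norm solution $X^+(y-Xu)$ of the consistent system $Xz=y-Xu$, and then plug in $y=Xw^\star$. The only difference is that you also justify the minimum-norm property (feasibility because $XX^+$ projects onto the range of $X$, optimality and uniqueness via the Pythagorean split over the null space of $X$), which the paper simply asserts as standard.
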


\begin{proof}
The constrained minimization is equivalent to
\[
\min_{z\in\R^d}\frac12\|z\|_2^2
\qquad \text{subject to}\qquad
Xz=y-Xu,
\]
after substituting \(z=w-u\). The minimum-norm solution is \(z=X^+(y-Xu)\), so
\[
\mathcal S_{X,y}(u)=u+X^+(y-Xu).
\]
Under the common-solution assumption \(y=Xw^\star\), this becomes
\[
u+X^+X(w^\star-u)=u+(I-P)(w^\star-u)=w^\star+P(u-w^\star).
\]
\end{proof}

\subsection[Basic properties of the overlap operator]{Basic properties of \(S_\Pi\)}

\begin{lemma}[Self-adjointness and positivity of \(S_\Pi\)]\label{lem:appendix-SPi}
Let \(S_\Pi:\mathbb S^d\to\mathbb S^d\) be defined by \(S_\Pi(A)=\E[P_tAP_t]\). Then:
\begin{enumerate}[label=(\roman*),leftmargin=2.2em]
\item \(S_\Pi\) is self-adjoint with respect to the Frobenius inner product.
\item \(S_\Pi\) is positive semidefinite, that is,
\[
\langle A,S_\Pi(A)\rangle_F = \E\|P_tAP_t\|_F^2 \ge 0
\qquad \text{for all } A\in\mathbb S^d.
\]
\item \(S_\Pi\) preserves positive semidefiniteness: if \(A\succeq 0\), then \(S_\Pi(A)\succeq 0\).
\end{enumerate}
\end{lemma}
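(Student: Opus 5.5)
The plan is to verify the three properties directly from the definition \(S_\Pi(A)=\E[P_tAP_t]\), using only that each \(P_t\) is symmetric and idempotent (it is the orthogonal projector onto \(\Null(X_t)\)), together with linearity of trace and expectation. Well-definedness is immediate: \(P_tAP_t\) is symmetric whenever \(A\) is, and its entries are bounded by \(\|A\|_F\) since \(\|P_t\|_2\le 1\). Hence the expectation exists and lies in \(\mathbb S^d\), so \(S_\Pi\) maps \(\mathbb S^d\) to itself.

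For (i), I fix \(A,B\in\mathbb S^d\) and compute
\[
\langle A,S_\Pi(B)\rangle_F=\tr\!\big(A\,\E[P_tBP_t]\big)=\E\tr(AP_tBP_t).
\]
Cyclicity of the trace gives \(\tr(AP_tBP_t)=\tr(P_tAP_tB)\), and taking expectations yields \(\langle S_\Pi(A),B\rangle_F\). Moving the expectation outside the trace is justified because the trace is a finite linear functional and the integrand is bounded.

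For (ii), I set \(B=A\) and insert idempotence, \(P_t=P_t^2\):
\[
\tr(AP_tAP_t)=\tr(P_tAP_t\,P_tAP_t)=\tr\big((P_tAP_t)^\top(P_tAP_t)\big)=\|P_tAP_t\|_F^2 .
\]
Here the middle equality uses cyclicity once more, and the last step uses the symmetry of \(P_tAP_t\). Taking expectations gives the stated identity, and nonnegativity follows.

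For (iii), I take \(A\succeq 0\) and any \(v\in\R^d\). Then
\[
v^\top S_\Pi(A)v=\E\big[(P_tv)^\top A(P_tv)\big]\ge 0,
\]
because the integrand is pointwise nonnegative.

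None of these steps is a real obstacle. The only point needing care is the interchange of expectation with trace and quadratic forms, which is handled by the boundedness noted at the start.
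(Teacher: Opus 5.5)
Your proposal is correct and follows essentially the same route as the paper. You use cyclicity of the trace for (i), idempotence plus symmetry of \(P_tAP_t\) for (ii), and pointwise positive semidefiniteness of \(P_tAP_t\) for (iii); your added boundedness remark justifying the interchange of expectation and trace is a harmless extra detail that the paper leaves implicit.
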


\begin{proof}
For \(A,B\in\mathbb S^d\),
\[
\langle A,S_\Pi(B)\rangle_F
=
\E\tr(AP_tBP_t)
=
\E\tr(P_tAP_tB)
=
\langle S_\Pi(A),B\rangle_F,
\]
so \(S_\Pi\) is self-adjoint. Also,
\[
\langle A,S_\Pi(A)\rangle_F
=
\E\tr(AP_tAP_t)
=
\E\tr(P_tAP_tP_tAP_t)
=
\E\|P_tAP_t\|_F^2 \ge 0,
\]
because \(P_t^2=P_t\) and \(P_tAP_t\) is symmetric. Finally, if \(A\succeq 0\), then \(P_tAP_t\succeq 0\) for every \(t\), hence its expectation \(S_\Pi(A)\) is also positive semidefinite.
\end{proof}

\subsection{Principal-angle identity}

\begin{lemma}[Principal-angle formula]\label{lem:appendix-principal-angles}
Let \(E\subset\R^d\) be an \(m\)-dimensional subspace, and let \(U\in\R^{d\times m}\) have orthonormal columns spanning \(E\). If \(P\) is the orthogonal projector onto another subspace \(N\subset\R^d\), then
\[
\|U^\top P U\|_F^2
=
\sum_{j=1}^m \cos^4(\theta_j(E,N)),
\]
where \(\theta_1(E,N),\dots,\theta_m(E,N)\) are the principal angles between \(E\) and \(N\).
\end{lemma}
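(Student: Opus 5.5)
The plan is to reduce the statement to the standard characterization of principal angles via the singular values of \(U^\top V\), where \(V\) has orthonormal columns spanning \(N\). Let \(n:=\dim N\) and choose \(V\in\R^{d\times n}\) with orthonormal columns spanning \(N\), so that \(P=VV^\top\). Then
\[
U^\top P U=(U^\top V)(U^\top V)^\top=:MM^\top,
\qquad M:=U^\top V\in\R^{m\times n}.
\]
By the standard (Björck--Golub) definition, the cosines of the principal angles between \(E\) and \(N\) are the singular values \(\sigma_1\ge\sigma_2\ge\cdots\) of \(M\). We adopt the convention that when \(m>n\), the missing angles equal \(\pi/2\), so their cosines are zero.

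Next, take a singular value decomposition \(M=W\Sigma Z^\top\). This gives \(MM^\top=W\Sigma\Sigma^\top W^\top\), so the eigenvalues of the symmetric positive semidefinite \(m\times m\) matrix \(U^\top PU\) are exactly \(\sigma_j^2=\cos^2\theta_j(E,N)\) for \(j=1,\dots,m\), padded with zeros as needed. Since the Frobenius norm of a symmetric matrix is the \(\ell_2\) norm of its eigenvalues, we obtain
\[
\|U^\top PU\|_F^2=\tr\big((U^\top PU)^2\big)=\sum_{j=1}^m\sigma_j^4=\sum_{j=1}^m\cos^4\theta_j(E,N).
\]

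Finally, the result does not depend on the choice of bases. Replacing \(U\) by \(UQ\) with \(Q\) orthogonal conjugates \(U^\top PU\) by \(Q\), and \(P\) does not depend on \(V\) at all.

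The only delicate point is definitional. We must fix the definition of principal angles, namely \(\cos\theta_j=\sigma_j(U^\top V)\), and handle the case \(m>\dim N\) or \(N=\{0\}\) by the zero-cosine convention. Once that is fixed, the argument is purely spectral and routine.
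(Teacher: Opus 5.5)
Your proof is correct and follows essentially the same route as the paper. Both arguments identify the eigenvalues of the positive semidefinite matrix $U^\top P U$ with $\cos^2\theta_j(E,N)$ and then read off $\|U^\top P U\|_F^2=\tr\bigl((U^\top P U)^2\bigr)=\sum_j\cos^4\theta_j(E,N)$. You get the eigenvalues from the singular values of $U^\top V$ with $P=VV^\top$, while the paper uses the singular values of $P$ restricted to $E$; these coincide, and your explicit zero-cosine convention for $m>\dim N$ is a minor point the paper leaves implicit.
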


\begin{proof}
The matrix \(U^\top P U\) is symmetric positive semidefinite. Its eigenvalues are the squared singular values of the restriction of \(P\) to \(E\), namely \(\cos^2(\theta_1(E,N)),\dots,\cos^2(\theta_m(E,N))\). Therefore
\[
\|U^\top P U\|_F^2
=
\tr\!\big((U^\top P U)^2\big)
=
\sum_{j=1}^m \cos^4(\theta_j(E,N)).
\]
\end{proof}

\section{Full Proofs}\label{app:full-proofs}

The main text already contains readable proofs of the core results. For completeness, we collect full versions here in one place.

\subsection{Proof of Theorem~\ref{thm:exact-forgetting-identity}}\label{app:proof-exact-forgetting-identity}

\paragraph{Restatement.}
For every \(k\ge 2\),
\[
F^\Pi(k)=\frac1k\sum_{t=1}^{k-1}\mathbb E\tr\!\Big(S_\Pi^{k-t}(C_t)\,P_t\,S_\Pi^{t-1}(X_\star)\,P_t\Big).
\]

\begin{proof}
Let \(e_t:=w_t-w^\star\). By Lemma~\ref{lem:appendix-exact-fit},
\[
e_t=P_t e_{t-1},
\qquad
e_0=-w^\star.
\]
Hence
\[
e_t=-P_tP_{t-1}\cdots P_1w^\star.
\]

Fix \(t\le k-1\). Write
\[
R_{t,k}:=P_kP_{k-1}\cdots P_{t+1},
\qquad
L_{t-1}:=P_{t-1}P_{t-2}\cdots P_1,
\]
with \(L_0=I\). Then
\[
e_k=R_{t,k}e_t=-R_{t,k}P_tL_{t-1}w^\star,
\]
and therefore
\[
e_ke_k^\top
=
R_{t,k}P_tL_{t-1}X_\star L_{t-1}^\top P_tR_{t,k}^\top.
\]
Using \(\ell_t(w_k)=e_k^\top C_t e_k=\tr(C_t e_k e_k^\top)\) and cyclicity of trace, we obtain
\[
\ell_t(w_k)
=
\tr\!\big(R_{t,k}^\top C_t R_{t,k}\,P_t\,L_{t-1}X_\star L_{t-1}^\top P_t\big).
\]

Now condition on the current task. Because future tasks are independent of the present and past,
\[
\E\!\left[R_{t,k}^\top C_t R_{t,k}\mid X_t,y_t\right]
=
S_\Pi^{k-t}(C_t).
\]
Similarly, since the past tasks are i.i.d. and independent of the current task,
\[
\E[L_{t-1}X_\star L_{t-1}^\top]
=
S_\Pi^{t-1}(X_\star).
\]
Substituting these two identities and averaging over the current task yields
\[
\E\,\ell_t(w_k)
=
\E\tr\!\Big(S_\Pi^{k-t}(C_t)\,P_t\,S_\Pi^{t-1}(X_\star)\,P_t\Big).
\]

Finally,
\[
F^\Pi(k)=\E\!\left[\frac1k\sum_{t=1}^k \ell_t(w_k)\right].
\]
The last term vanishes because the \(k\)-th task is fit exactly, so \(\ell_k(w_k)=0\). Summing over \(t=1,\dots,k-1\) proves the claim.
\end{proof}

\subsection{Proof of Theorem~\ref{thm:exact-spectral-expansion}}\label{app:proof-exact-spectral-expansion}

\paragraph{Restatement.}
If \(\mathbb S^d=\mathcal E_1\oplus\cdots\oplus\mathcal E_R\) is the orthogonal eigenspace decomposition of \(S_\Pi\), with spectral levels \(\rho_1,\dots,\rho_R\) and Frobenius projectors \(\mathcal Q_1,\dots,\mathcal Q_R\), then
\[
F^\Pi(k)=\frac1k\sum_{r,s=1}^R \beta_{rs}\Gamma_{rs}(k),
\]
where
\[
\beta_{rs}:=\mathbb E\tr\!\big(\mathcal Q_r(C_t)\,P_t\,\mathcal Q_s(X_\star)\,P_t\big)
\]
and
\[
\Gamma_{rs}(k)=
\begin{cases}
(k-1)\rho_r^{\,k-1}, & r=s,\\[0.4em]
\dfrac{\rho_r^{\,k-1}-\rho_s^{\,k-1}}{\rho_r-\rho_s}, & r\neq s.
\end{cases}
\]

\begin{proof}
Because \(S_\Pi\) is self-adjoint on \(\mathbb S^d\), the spectral theorem gives
\[
A=\sum_{r=1}^R \mathcal Q_r(A),
\qquad
S_\Pi^m(A)=\sum_{r=1}^R \rho_r^m \mathcal Q_r(A)
\]
for every \(A\in\mathbb S^d\) and \(m\ge 0\).

Apply this to the identity from Theorem~\ref{thm:exact-forgetting-identity}:
\[
S_\Pi^{k-t}(C_t)=\sum_{r=1}^R \rho_r^{k-t}\mathcal Q_r(C_t),
\qquad
S_\Pi^{t-1}(X_\star)=\sum_{s=1}^R \rho_s^{t-1}\mathcal Q_s(X_\star).
\]
Substituting these expansions into the trace and using bilinearity,
\begin{align*}
\mathbb E\tr\!\Big(S_\Pi^{k-t}(C_t)\,P_t\,S_\Pi^{t-1}(X_\star)\,P_t\Big)
&=
\sum_{r,s=1}^R
\rho_r^{k-t}\rho_s^{t-1}
\mathbb E\tr\!\big(\mathcal Q_r(C_t)\,P_t\,\mathcal Q_s(X_\star)\,P_t\big) \\
&=
\sum_{r,s=1}^R \rho_r^{k-t}\rho_s^{t-1}\beta_{rs}.
\end{align*}
Summing over \(t\) and dividing by \(k\) gives the claimed expansion.

It remains to evaluate \(\Gamma_{rs}(k):=\sum_{t=1}^{k-1}\rho_r^{k-t}\rho_s^{t-1}\). If \(r=s\), each term equals \(\rho_r^{k-1}\), so
\[
\Gamma_{rr}(k)=(k-1)\rho_r^{k-1}.
\]
If \(r\neq s\), this is a finite geometric sum:
\[
\Gamma_{rs}(k)
=
\rho_r^{k-1}\sum_{t=1}^{k-1}\left(\frac{\rho_s}{\rho_r}\right)^{t-1}
=
\frac{\rho_r^{k-1}-\rho_s^{k-1}}{\rho_r-\rho_s}.
\qedhere
\]
\end{proof}

\subsection{Proof of Theorem~\ref{thm:leading-term}}\label{app:proof-leading-term}

\paragraph{Restatement.}
Assume \(\eta_\Pi<\rho_\Pi\). Then, as \(k\to\infty\),
\[
F^\Pi(k)=c_\Pi^{\mathrm{top}}(w^\star)\rho_\Pi^{\,k-1}
+
O\!\left(\frac{\rho_\Pi^{\,k-1}}{k}+\eta_\Pi^{\,k-1}\right).
\]

\begin{proof}
Apply Theorem~\ref{thm:exact-spectral-expansion}. Order the spectral levels as
\[
\rho_1=\rho_\Pi>\rho_2\ge\cdots\ge \rho_R\ge 0,
\]
and write \(\mathcal Q_1=\mathcal Q_\Pi\). Then \(\max_{r\ge 2}\rho_r=\eta_\Pi\), and
\[
F^\Pi(k)=\frac1k\sum_{r,s=1}^R \beta_{rs}\Gamma_{rs}(k).
\]

The top-top term is
\[
\frac1k\beta_{11}\Gamma_{11}(k)
=
\frac1k\beta_{11}(k-1)\rho_\Pi^{k-1}
=
\beta_{11}\rho_\Pi^{k-1}+O\!\left(\frac{\rho_\Pi^{k-1}}{k}\right).
\]
Since \(\mathcal Q_1=\mathcal Q_\Pi\), we have \(\beta_{11}=c_\Pi^{\mathrm{top}}(w^\star)\).

If \(s\ge 2\), then
\[
\Gamma_{1s}(k)=\frac{\rho_\Pi^{k-1}-\rho_s^{k-1}}{\rho_\Pi-\rho_s}=O(\rho_\Pi^{k-1}),
\]
because \(\rho_s\le \eta_\Pi<\rho_\Pi\). Multiplying by \(1/k\) gives \(O(\rho_\Pi^{k-1}/k)\). The same estimate holds for \(\Gamma_{r1}(k)\) with \(r\ge 2\).

If \(r,s\ge 2\), then \(\rho_r,\rho_s\le \eta_\Pi\), so
\[
\Gamma_{rs}(k)=O(k\eta_\Pi^{k-1}),
\]
and the prefactor \(1/k\) reduces this to \(O(\eta_\Pi^{k-1})\).

Combining the top-top, mixed, and lower-lower contributions yields
\[
F^\Pi(k)=c_\Pi^{\mathrm{top}}(w^\star)\rho_\Pi^{k-1}
+
O\!\left(\frac{\rho_\Pi^{k-1}}{k}+\eta_\Pi^{k-1}\right).
\]
\end{proof}

\subsection{Deferred positivity statement for the top coefficient}\label{app:proof-top-coefficient}

\begin{proposition}[Nonnegativity and generic positivity of the top coefficient]\label{prop:appendix-top-coefficient}
Assume \(\eta_\Pi<\rho_\Pi\). For each \(u\in\R^d\), let
\[
X_u:=uu^\top,
\qquad
c_\Pi^{\mathrm{top}}(u):=
\E\tr\!\big(\mathcal Q_\Pi(C_t)\,P_t\,\mathcal Q_\Pi(X_u)\,P_t\big).
\]
Then \(c_\Pi^{\mathrm{top}}(u)\ge 0\) for every \(u\in\R^d\). Moreover, there exists a symmetric positive-semidefinite matrix \(M_\Pi^{\mathrm{top}}\in\mathbb S^d\) such that
\[
c_\Pi^{\mathrm{top}}(u)=u^\top M_\Pi^{\mathrm{top}}u
\qquad\text{for every }u\in\R^d.
\]
Consequently, either \(c_\Pi^{\mathrm{top}}(\cdot)\equiv 0\), or its zero set is a proper quadratic variety and therefore has Lebesgue measure zero.
\end{proposition}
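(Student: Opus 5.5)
The proof has three steps: show that \(c_\Pi^{\mathrm{top}}\) is a quadratic form, prove it is nonnegative by applying Theorem~\ref{thm:leading-term} to a modified problem, and then read off the zero-set statement from positive semidefiniteness.

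\emph{Quadratic-form structure.} The map \(u\mapsto X_u=uu^\top\) is a quadratic function of \(u\). The maps \(\mathcal Q_\Pi\), \(B\mapsto \E\tr(\mathcal Q_\Pi(C_t)P_tBP_t)\) and the expectation are all linear. So \(c_\Pi^{\mathrm{top}}(u)=\Phi(uu^\top)\), where \(\Phi\) is the linear functional
\[
\Phi(B):=\E\tr\big(\mathcal Q_\Pi(C_t)P_t\mathcal Q_\Pi(B)P_t\big)
\]
on \(\mathbb S^d\). This functional is finite because \(\E\|C_t\|_F<\infty\) and \(\|\mathcal Q_\Pi(C_t)\|_F\le\|C_t\|_F\). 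By Riesz, \(\Phi(B)=\langle G,B\rangle_F\) for a unique \(G\in\mathbb S^d\). Hence \(c_\Pi^{\mathrm{top}}(u)=\tr(Guu^\top)=u^\top Gu\), and we set \(M_\Pi^{\mathrm{top}}:=G\).

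\emph{Nonnegativity.} Fix \(u\) and consider the same task law \(\Pi\), but with target \(u\) in place of \(w^\star\).

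One might first relabel the labels as \(y_t=X_tu\) and rerun the whole setup. That change of labels alters nothing else. The projectors \(P_t\), the covariances \(C_t\), the operator \(S_\Pi\), \(\rho_\Pi\), \(\eta_\Pi\) and \(\mathcal Q_\Pi\) depend only on the \(X_t\). The common-solution assumption holds with \(u\).

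It is simpler, though, to avoid relabeling altogether. The proof of Theorem~\ref{thm:exact-forgetting-identity} only uses \(e_k=-P_k\cdots P_1w^\star\). So define
\[
F_u(k):=\frac1k\sum_{t=1}^{k}\E\,\|X_t P_k\cdots P_1 u\|_2^2/n_t\ \ge 0 .
\]
The same computation as in Theorems~\ref{thm:exact-forgetting-identity} and~\ref{thm:exact-spectral-expansion}, and then the proof of Theorem~\ref{thm:leading-term} with \(X_u\) replacing \(X_\star\), gives
\[
F_u(k)=c_\Pi^{\mathrm{top}}(u)\rho_\Pi^{k-1}+O\big(\rho_\Pi^{k-1}/k+\eta_\Pi^{k-1}\big).
\]

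Suppose \(c_\Pi^{\mathrm{top}}(u)<0\). We need \(\rho_\Pi>0\), which holds because \(\rho_\Pi>\eta_\Pi\ge0\). Divide by \(\rho_\Pi^{k-1}\). The error becomes \(O(1/k+(\eta_\Pi/\rho_\Pi)^{k-1})\to 0\). Hence \(F_u(k)/\rho_\Pi^{k-1}\to c_\Pi^{\mathrm{top}}(u)<0\), contradicting \(F_u(k)\ge0\). So \(M_\Pi^{\mathrm{top}}\succeq0\).

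I expect this step to be the main care point. One has to be explicit that the identity and the asymptotics are valid for an arbitrary vector \(u\), since no \(w^\star\)-specific property was used. One also has to note that the hypothesis \(\eta_\Pi<\rho_\Pi\) is exactly what makes the normalized error vanish.

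\emph{Zero set.} Since \(M:=M_\Pi^{\mathrm{top}}\succeq 0\), we have \(u^\top Mu=\|M^{1/2}u\|^2\). Hence the zero set is \(\Null(M^{1/2})=\Null(M)\), a linear subspace.

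If \(M=0\), then \(c_\Pi^{\mathrm{top}}\equiv0\). Otherwise \(\Null(M)\) is a proper subspace, a degenerate quadric cut out by the nonzero polynomial \(u^\top Mu\). A proper subspace has Lebesgue measure zero, since it lies in a hyperplane. Equivalently, the zero set of any nonzero polynomial is null. This gives the dichotomy.
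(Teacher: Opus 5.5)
Your proposal is correct and follows essentially the paper's route: the paper also replaces \(w^\star\) by \(u\) through the relabeled law \(y_t=X_tu\), which leaves \(S_\Pi,\rho_\Pi,\eta_\Pi,\mathcal Q_\Pi\) unchanged. It then applies Theorem~\ref{thm:leading-term}, divides by \(\rho_\Pi^{k-1}\) to get \(c_\Pi^{\mathrm{top}}(u)\ge 0\), and reads off positive semidefiniteness of the representing matrix. Your explicit checks (finiteness of \(\Phi\), \(\rho_\Pi>\eta_\Pi\ge 0\)) and your identification of the zero set as the linear subspace \(\Null(M_\Pi^{\mathrm{top}})\) are small refinements of the paper's more generic ``zero set of a nonzero quadratic polynomial'' argument.
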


\begin{proof}
Fix \(u\in\R^d\). Keep the \(X\)-marginal induced by \(\Pi\) (equivalently, the law of \(P_t\) and \(C_t\)), and define a new realizable task law \(\Pi^{(u)}\) by drawing \(X_t\) from that marginal and setting \(y_t:=X_tu\). Then \(S_{\Pi^{(u)}}=S_\Pi\), so \(\rho_{\Pi^{(u)}}=\rho_\Pi\), \(\eta_{\Pi^{(u)}}=\eta_\Pi\), and \(\mathcal Q_{\Pi^{(u)}}=\mathcal Q_\Pi\). Writing \(F_u^\Pi(k)\) for the corresponding forgetting quantity under \(\Pi^{(u)}\), Theorem~\ref{thm:leading-term} applied to this modified law gives
\[
F_u^\Pi(k)=c_\Pi^{\mathrm{top}}(u)\rho_\Pi^{k-1}
+
O\!\left(\frac{\rho_\Pi^{k-1}}{k}+\eta_\Pi^{k-1}\right),
\]
where \(F_u^\Pi(k)\ge 0\) is the corresponding forgetting quantity. Dividing by \(\rho_\Pi^{k-1}\) and using \(\eta_\Pi<\rho_\Pi\), we obtain
\[
\frac{F_u^\Pi(k)}{\rho_\Pi^{k-1}}
=
c_\Pi^{\mathrm{top}}(u)
+
O\!\left(\frac1k+\Bigl(\frac{\eta_\Pi}{\rho_\Pi}\Bigr)^{k-1}\right).
\]
Taking \(k\to\infty\) yields \(c_\Pi^{\mathrm{top}}(u)\ge 0\).

Now \(u\mapsto X_u=uu^\top\) is quadratic in \(u\), and the map \(X\mapsto \E\tr(\mathcal Q_\Pi(C_t)\,P_t\,\mathcal Q_\Pi(X)\,P_t)\) is linear in \(X\). Therefore \(u\mapsto c_\Pi^{\mathrm{top}}(u)\) is a quadratic form. Hence there exists a symmetric matrix \(M_\Pi^{\mathrm{top}}\) such that
\[
c_\Pi^{\mathrm{top}}(u)=u^\top M_\Pi^{\mathrm{top}}u.
\]
Since this quantity is nonnegative for every \(u\), the matrix \(M_\Pi^{\mathrm{top}}\) is positive semidefinite.

If \(M_\Pi^{\mathrm{top}}=0\), then \(c_\Pi^{\mathrm{top}}(\cdot)\equiv 0\). Otherwise \(M_\Pi^{\mathrm{top}}\) has at least one positive eigenvalue, so the set \(\{u:c_\Pi^{\mathrm{top}}(u)=0\}\) is the null set of a nonzero quadratic polynomial, hence a proper quadratic variety of Lebesgue measure zero.
\end{proof}

\subsection{Proof of Theorem~\ref{thm:upper-bound}}\label{app:proof-upper-bound}

\paragraph{Restatement.}
For every \(k\ge 2\),
\[
F^\Pi(k)\le \frac{k-1}{k}\rho_\Pi^{k-1}\|X_\star\|_F\,\E\|C_t\|_F.
\]

\begin{proof}
Starting from Theorem~\ref{thm:exact-forgetting-identity},
\[
F^\Pi(k)=\frac1k\sum_{t=1}^{k-1}\mathbb E\tr\!\Big(S_\Pi^{k-t}(C_t)\,P_t\,S_\Pi^{t-1}(X_\star)\,P_t\Big).
\]
For symmetric matrices \(A,B\) and an orthogonal projector \(P\),
\[
|\tr(APBP)|
\le
\|A\|_F\,\|PBP\|_F
\le
\|A\|_F\,\|B\|_F.
\]
Therefore
\[
F^\Pi(k)
\le
\frac1k\sum_{t=1}^{k-1}\E\!\left[\|S_\Pi^{k-t}(C_t)\|_F\,\|S_\Pi^{t-1}(X_\star)\|_F\right].
\]
Since \(\|S_\Pi\|_{\mathrm{op},F}=\rho_\Pi\),
\[
\|S_\Pi^{k-t}(C_t)\|_F\le \rho_\Pi^{k-t}\|C_t\|_F,
\qquad
\|S_\Pi^{t-1}(X_\star)\|_F\le \rho_\Pi^{t-1}\|X_\star\|_F.
\]
Multiplying these estimates gives
\[
\|S_\Pi^{k-t}(C_t)\|_F\,\|S_\Pi^{t-1}(X_\star)\|_F
\le
\rho_\Pi^{k-1}\|C_t\|_F\,\|X_\star\|_F.
\]
Hence
\[
F^\Pi(k)\le \frac1k\sum_{t=1}^{k-1}\rho_\Pi^{k-1}\|X_\star\|_F\,\E\|C_t\|_F
=
\frac{k-1}{k}\rho_\Pi^{k-1}\|X_\star\|_F\,\E\|C_t\|_F.
\]
\end{proof}

\subsection{Proof of Proposition~\ref{prop:variational-rho}}\label{app:proof-variational-rho}

\paragraph{Restatement.}
For every nonzero \(A\in\mathbb S^d\),
\[
0\le \mathcal I_\Pi(A)\le 1,
\qquad
\rho_\Pi=\max_{A\in\mathbb S^d,\;A\neq 0}\mathcal I_\Pi(A).
\]
Moreover, for \(A=UU^\top\),
\[
\mathcal I_\Pi(UU^\top)=\frac{\E\|U^\top P_tU\|_F^2}{\|UU^\top\|_F^2},
\]
and for an \(m\)-dimensional subspace \(E\),
\[
\mathcal I_\Pi(\Pi_E)=\frac1m\E\sum_{j=1}^m \cos^4(\theta_j(E,\operatorname{null}(X_t))).
\]

\begin{proof}
The inequalities \(0\le \mathcal I_\Pi(A)\le 1\) are immediate from
\[
\mathcal I_\Pi(A)=\frac{\E\|P_tAP_t\|_F^2}{\|A\|_F^2}
\]
and the projector inequality \(\|P_tAP_t\|_F\le \|A\|_F\).

By Lemma~\ref{lem:appendix-SPi},
\[
\langle A,S_\Pi(A)\rangle_F = \E\|P_tAP_t\|_F^2,
\]
so
\[
\mathcal I_\Pi(A)=\frac{\langle A,S_\Pi(A)\rangle_F}{\|A\|_F^2}.
\]
Since \(S_\Pi\) is self-adjoint and positive semidefinite, the Rayleigh--Ritz variational principle gives
\[
\max_{A\neq 0}\mathcal I_\Pi(A)=\|S_\Pi\|_{\mathrm{op},F}=\rho_\Pi.
\]

If \(A=UU^\top\), then
\[
P_tAP_t=(P_tU)(P_tU)^\top,
\]
hence
\[
\|P_tAP_t\|_F^2
=
\tr\!\big((P_tUU^\top P_t)^2\big)
=
\tr\!\big((U^\top P_tU)^2\big)
=
\|U^\top P_tU\|_F^2.
\]
This proves the second identity.

Finally, if \(U\) has orthonormal columns spanning \(E\), then \(\Pi_E=UU^\top\) and \(\|\Pi_E\|_F^2=\tr(\Pi_E)=m\). Therefore
\[
\mathcal I_\Pi(\Pi_E)=\frac{1}{m}\E\|U^\top P_tU\|_F^2.
\]
Lemma~\ref{lem:appendix-principal-angles} identifies the right-hand side with
\[
\frac1m\E\sum_{j=1}^m \cos^4(\theta_j(E,\operatorname{null}(X_t))),
\]
which is the claimed geometric formula.
\end{proof}

\subsection{Proof of Proposition~\ref{prop:commuting-zero-forgetting}}\label{app:proof-commuting-zero-forgetting}

\paragraph{Restatement.}
If all projectors in the support of \(\Pi\) commute, then \(F^\Pi(k)=0\) for every \(k\ge 1\).

\begin{proof}
Let \(e_t=w_t-w^\star\). Then
\[
e_k=-P_kP_{k-1}\cdots P_1w^\star.
\]
Fix \(t\le k\). If all projectors commute, then \(P_t\) can be moved to the front:
\[
e_k=P_t\big(-P_k\cdots P_{t+1}P_{t-1}\cdots P_1w^\star\big).
\]
Hence \(e_k\in \operatorname{range}(P_t)=\operatorname{null}(X_t)\), so \(X_t e_k=0\). Therefore
\[
\ell_t(w_k)=\frac1{n_t}\|X_t(w_k-w^\star)\|_2^2=\frac1{n_t}\|X_te_k\|_2^2=0.
\]
Since this holds for every \(t\le k\), averaging over \(t\) gives \(F^\Pi(k)=0\).
\end{proof}

\section{Additional Results}\label{app:additional-results}

This section collects results that are closely related to the current paper but are not needed in the main line of the argument. They come from the earlier projection-based route to random-order forgetting. The point of recording them here is twofold: first, they show how far that route can be pushed; second, they make clear why projector-level quantities alone are insufficient to characterize actual loss-valued forgetting.

\subsection{Projection-based surrogates}

Throughout this section, assume in addition that \(\|X_t\|_2\le 1\) almost surely and \(\|w^\star\|_2\le 1\). Define the residual surrogate
\[
\mathcal R^\Pi(k):=
\E\!\left[\frac1k\sum_{t=1}^{k-1}\|(\Id-P_t)P_k\cdots P_1\|_2^2\right].
\]
Let
\[
\Pbar:=\E[P_t],
\qquad
f_m:=\E\|(\Id-P)P_m\cdots P_1\|_F^2,
\]
where \(P,P_1,\dots,P_m\) are i.i.d. projector samples from \(\Pi\).

\begin{proposition}[Residual surrogate upper-bounds forgetting]\label{prop:appendix-residual-surrogate}
Under the additional normalization \(\|X_t\|_2\le 1\) and \(\|w^\star\|_2\le 1\),
\[
F^\Pi(k)\le \mathcal R^\Pi(k).
\]
\end{proposition}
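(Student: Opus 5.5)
We prove the bound pathwise: it holds for every realization of the task sequence, and expectations are taken only at the end. As in the proof of Theorem~\ref{thm:exact-forgetting-identity}, write \(e_k=w_k-w^\star=-P_kP_{k-1}\cdots P_1w^\star\) using Lemma~\ref{lem:appendix-exact-fit}. Since \(\ell_k(w_k)=0\) by exact fitting, we have
\[
\frac1k\sum_{t=1}^k\ell_t(w_k)=\frac1k\sum_{t=1}^{k-1}\frac1{n_t}\|X_te_k\|_2^2 .
\]
It therefore suffices to show, for each fixed \(t\le k-1\),
\[
\frac1{n_t}\|X_te_k\|_2^2\le \|(\Id-P_t)P_k\cdots P_1\|_2^2 .
\]

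\textbf{Step 1: insert the projector \(\Id-P_t\).} Because \(P_t=\Id-X_t^+X_t\) projects onto \(\Null(X_t)\), we have \(X_tP_t=0\). Hence \(X_t=X_t(\Id-P_t)\), and so \(X_te_k=-X_t(\Id-P_t)P_k\cdots P_1w^\star\).

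\textbf{Step 2: apply submultiplicativity.} This gives
\[
\|X_te_k\|_2\le\|X_t\|_2\,\|(\Id-P_t)P_k\cdots P_1\|_2\,\|w^\star\|_2 .
\]
Under the normalization \(\|X_t\|_2\le1\) and \(\|w^\star\|_2\le1\), squaring yields
\[
\|X_te_k\|_2^2\le\|(\Id-P_t)P_k\cdots P_1\|_2^2 .
\]
Since \(n_t\ge1\), we have \(1/n_t\le1\), so the factor \(\frac1{n_t}\) only helps. Summing over \(t=1,\dots,k-1\), dividing by \(k\), and taking expectations gives \(F^\Pi(k)\le\mathcal R^\Pi(k)\).

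The only point that needs care is the scaling convention. The loss carries the factor \(1/n_t\) (equivalently \(C_t=\frac1{n_t}X_t^\top X_t\)), and the proof uses it only through \(n_t\ge1\), while the normalization is placed on \(\|X_t\|_2\) itself. Because \(\frac1{n_t}\|X_t\|_2^2\le\|X_t\|_2^2\le1\), the inequality goes in the right direction.

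The argument never uses commutativity of the \(P_i\) or independence of the tasks. The proposition is therefore a deterministic pathwise comparison, and the i.i.d.\ structure enters only through the outer expectation.
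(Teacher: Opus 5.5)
Your proof is correct and is essentially the paper's argument. The paper also bounds \(\ell_t(w_k)\) pathwise via \(\|X_t(w_k-w^\star)\|_2^2\le\|(\Id-P_t)(w_k-w^\star)\|_2^2\), which is your factorization \(X_t=X_t(\Id-P_t)\) combined with \(\|X_t\|_2\le 1\), then substitutes \(w_k-w^\star=-P_k\cdots P_1w^\star\) with \(\|w^\star\|_2\le 1\) and averages; your explicit handling of the \(t=k\) term via \(\ell_k(w_k)=0\) and of \(1/n_t\le 1\) are details the paper leaves implicit.
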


\begin{proof}
For each \(t\le k\),
\[
\ell_t(w_k)=\frac1{n_t}\|X_t(w_k-w^\star)\|_2^2\le \|X_t\|_2^2\,\|w_k-w^\star\|_2^2.
\]
More sharply, since \(P_t=I-X_t^+X_t\),
\[
\|X_t(w_k-w^\star)\|_2^2 \le \|(\Id-P_t)(w_k-w^\star)\|_2^2.
\]
Now \(w_k-w^\star=-P_k\cdots P_1w^\star\), and \(\|w^\star\|_2\le 1\), so
\[
\ell_t(w_k)\le \|(\Id-P_t)P_k\cdots P_1\|_2^2.
\]
Averaging over \(t\) and expectations yields the claim.
\end{proof}

\begin{proposition}[Exact Frobenius-chain formula]\label{prop:appendix-frobenius-chain}
For every \(m\ge 0\),
\[
f_m=\tr\!\big(S_\Pi^m(\Id-\Pbar)\big).
\]
Equivalently, if
\[
\kappa_1:=\tr(\Pbar),
\qquad
\kappa_m:=\tr\!\big(S_\Pi^{m-1}(\Pbar)\big)\quad (m\ge 2),
\]
then
\[
f_m=\kappa_m-\kappa_{m+1}\ge 0.
\]
Moreover, the sequence \(\{f_m\}_{m\ge 0}\) is non-increasing.
\end{proposition}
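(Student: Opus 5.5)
The plan is to compute \(f_m\) directly by conditioning, in the same way as the proof of Theorem~\ref{thm:exact-forgetting-identity}, and then read off the telescoping form and the monotonicity from properties of \(S_\Pi\) (Lemma~\ref{lem:appendix-SPi}).

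\textbf{Step 1: the chain formula.} Set \(L_m:=P_m\cdots P_1\), so that
\[
f_m=\E\tr\!\big(L_m^\top(\Id-P)^\top(\Id-P)L_m\big)=\E\tr\!\big(L_m^\top(\Id-P)L_m\big),
\]
using that \(\Id-P\) is an orthogonal projector.

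First take the expectation over \(P\), which is independent of the chain; this replaces \(\Id-P\) by \(\Id-\Pbar\). Then peel off the outermost factor:
\[
\E\big[P_1^\top\cdots P_m^\top A\,P_m\cdots P_1\big].
\]
Conditioning on \(P_1,\dots,P_{m-1}\) replaces the inner \(P_mAP_m\) by \(S_\Pi(A)\). Iterating this gives \(\E[L_m^\top A L_m]=S_\Pi^m(A)\) for any deterministic symmetric \(A\). Note that the order of application is reversed relative to \(L_{t-1}X_\star L_{t-1}^\top\), but by i.i.d.-ness this makes no difference. Taking the trace and exchanging it with the expectation yields
\[
f_m=\tr\!\big(S_\Pi^m(\Id-\Pbar)\big).
\]

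\textbf{Step 2: the \(\kappa\) form.} Observe that \(S_\Pi(\Id)=\E[P_t^2]=\Pbar\), so \(S_\Pi^m(\Id)=S_\Pi^{m-1}(\Pbar)\) for \(m\ge1\). By linearity,
\[
f_m=\tr S_\Pi^m(\Id)-\tr S_\Pi^m(\Pbar)=\kappa_m-\kappa_{m+1}.
\]
For \(m=1\) this reads \(\kappa_1=\tr\Pbar\), as required. In the case \(m=0\) the natural convention is \(\kappa_0=d\), giving \(f_0=d-\tr\Pbar\); I would state the equivalence for \(m\ge1\) and handle \(m=0\) directly. Nonnegativity holds since \(f_m\) is an expectation of a squared Frobenius norm; alternatively, \(\Id-\Pbar\succeq0\) and \(S_\Pi\) preserves positive semidefiniteness.

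\textbf{Step 3: monotonicity.} This is the only nontrivial step. Write \(B_m:=S_\Pi^m(\Id-\Pbar)\succeq0\). Then
\[
f_{m+1}=\tr S_\Pi(B_m)=\E\tr(P_tB_mP_t)=\E\tr(B_mP_t)=\tr(B_m\Pbar).
\]
Since \(B_m\succeq0\) and \(0\preceq\Pbar\preceq\Id\), we get \(\tr(B_m\Pbar)\le\tr(B_m)=f_m\). Equivalently, \(\tr S_\Pi(B)\le\tr B\) for every \(B\succeq0\), i.e.\ \(S_\Pi\) is trace non-increasing on the PSD cone. I expect the main care to be needed in correctly ordering the conditioning in Step 1 and in the \(m=0\) edge case of the \(\kappa\) indexing, rather than in the monotonicity itself.
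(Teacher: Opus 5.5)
Your proposal is correct and follows essentially the same route as the paper: iterated conditioning from the innermost factor gives $f_m=\tr\big(S_\Pi^m(\Id-\Pbar)\big)$, and monotonicity follows from $\tr(S_\Pi(B_m))=\tr(\Pbar B_m)\le\tr(B_m)$ for $B_m\succeq 0$. For the $\kappa$-form you use $S_\Pi(\Id)=\Pbar$ and linearity, which is the same computation as the paper's extra independent copy $P_{m+1}$; your remark that this form needs $m\ge 1$ (or the convention $\kappa_0=d$) is a valid tightening, since the paper leaves $\kappa_0$ undefined.
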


\begin{proof}
Using symmetry and idempotence of the projectors,
\[
f_m
=
\E\tr\!\big(P_1\cdots P_m(\Id-P)P_m\cdots P_1\big).
\]
First average over the independent copy \(P\), replacing \(\Id-P\) by \(\Id-\Pbar\). Then average successively over \(P_m,\dots,P_1\). Each step applies the map \(A\mapsto \E[P_tAP_t]=S_\Pi(A)\), hence
\[
f_m=\tr\!\big(S_\Pi^m(\Id-\Pbar)\big).
\]
Expanding \(\Id-P=\Id-\Pbar-(P-\Pbar)\) with an extra independent copy \(P_{m+1}\) gives
\[
f_m=\E\tr(P_1\cdots P_m\cdots P_1)-\E\tr(P_1\cdots P_mP_{m+1}P_m\cdots P_1)=\kappa_m-\kappa_{m+1}.
\]
Finally, if \(A_m:=S_\Pi^m(\Id-\Pbar)\), then \(A_m\succeq 0\) by Lemma~\ref{lem:appendix-SPi}, and
\[
f_{m+1}=\tr(S_\Pi(A_m))=\tr(\Pbar A_m)\le \tr(A_m)=f_m,
\]
because \(0\preceq \Pbar\preceq \Id\).
\end{proof}

\begin{proposition}[Projection-based transfer bound]\label{prop:appendix-transfer}
For every \(k\ge 2\), letting \(h:=\lceil (k-1)/2\rceil\),
\[
\mathcal R^\Pi(k)\le 4f_h.
\]
Consequently, under the additional normalization assumptions,
\[
F^\Pi(k)\le 4f_h.
\]
\end{proposition}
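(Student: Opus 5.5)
The plan is to bound each summand of \(\mathcal R^\Pi(k)\) separately, by a quantity of the form \(f_m\), and then use the monotonicity of \(\{f_m\}\) from Proposition~\ref{prop:appendix-frobenius-chain} to collapse everything to \(f_h\). The second assertion is then immediate from Proposition~\ref{prop:appendix-residual-surrogate}. For each \(t\le k-1\) write the chain as \((\Id-P_t)\,R_{t,k}\,P_t L_{t-1}\), where \(R_{t,k}=P_k\cdots P_{t+1}\) has length \(k-t\) and \(L_{t-1}=P_{t-1}\cdots P_1\) has length \(t-1\). Since \(h=\lceil (k-1)/2\rceil\), for every \(t\) at least one of the two segments has length \(\ge h-1\), and for most \(t\) at least \(h\). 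So I split the sum into \emph{early} indices (\(k-t\ge h\)) and \emph{late} indices (\(k-t<h\), hence \(t-1\ge k-h-1\approx h\)).

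For early indices the key observation is that \(P_t\) is independent of \(R_{t,k}\). Using \(\|\cdot\|_2\le\|\cdot\|_F\) and \(\|P_tL_{t-1}\|_2\le 1\),
\[
\E\|(\Id-P_t)P_k\cdots P_1\|_2^2\le \E\|(\Id-P_t)R_{t,k}\|_F^2=\E\tr\!\big(P_{t+1}\cdots P_k(\Id-P_t)P_k\cdots P_{t+1}\big).
\]
Averaging first over \(P_t\) replaces \(\Id-P_t\) by \(\Id-\Pbar\). Averaging then successively over \(P_k,\dots,P_{t+1}\) applies \(S_\Pi\) exactly \(k-t\) times, as in the proof of Proposition~\ref{prop:appendix-frobenius-chain}. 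This yields \(\tr(S_\Pi^{k-t}(\Id-\Pbar))=f_{k-t}\le f_h\). Summed over the early indices and divided by \(k\), this gives at most \(f_h\).

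For late indices I will use the prefix instead. Set \(x_j:=P_j\cdots P_1\) (matrix-valued, so as to work in Frobenius norm). Since \((\Id-P_t)P_t=0\), we have \((\Id-P_t)x_k=(\Id-P_t)(x_k-x_t)\). I would then use \(\|x_{t-1}\|\)-type Pythagorean identities for products of projectors, \(\|x_{j-1}\|_F^2-\|x_j\|_F^2=\|(\Id-P_j)x_{j-1}\|_F^2\). Taking expectations and using independence of \(P_j\) from \(x_{j-1}\), the right-hand side has expectation \(\tr(S_\Pi^{j-1}(\Id-\Pbar))=f_{j-1}\). The aim is an estimate of the form
\[
\E\|(\Id-P_t)x_k\|_F^2\le 2\,\E\|(\Id-P_t)x_{t-1}\|_F^2+2\,\E\|x_{t-1}-x_k\|_F^2 .
\]
The first term equals \(f_{t-1}\le f_{h}\) (up to the off-by-one at the boundary, which the ceiling in \(h\) is designed to absorb). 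The second term should be controlled by a telescoping sum of \(f_{j-1}\) over \(j\ge t\), so that averaging over the at most \(h\) late indices produces another \(\lesssim f_h\). Adding the early and late contributions would give the constant \(4\).

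The main obstacle is the drift term \(\E\|x_{t-1}-x_k\|_F^2\). A naive Cauchy--Schwarz on the telescoping sum \(x_k-x_{t-1}=\sum_{j=t}^{k}(P_j-\Id)x_{j-1}\) loses a factor \(k-t+1\), which is of order \(h\) and would ruin the bound. To avoid this, I would first try to exploit the averaging over late \(t\). Rather than bounding each term, I would bound \(\frac1k\sum_{t>k-h}\E\|(\Id-P_t)(x_k-x_{t-1})\|_F^2\) jointly, using that the increments \((\Id-P_j)x_{j-1}\) are mutually "orthogonal in energy": their squared norms sum to at most \(\|x_{t-1}\|_F^2-\|x_k\|_F^2\). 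If that fails, the fallback is a time-reversal argument. The law of \((P_1,\dots,P_k)\) is exchangeable, so the late segment can be treated exactly like the early segment after reversing the chain; this reduces the late case to the early one at the price of the factor \(2\) from the triangle inequality, which is where I expect the constant \(4=2\cdot 2\) to come from.
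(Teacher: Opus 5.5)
\textbf{Verdict: genuine gap.} Your treatment of the early indices is correct, but the late-index case is left open, and neither route you propose for it can work.

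\textbf{The early case.} Since \(P_t\) is independent of \(R_{t,k}\),
\[
\E\|(\Id-P_t)P_k\cdots P_1\|_2^2\le \E\|(\Id-P_t)R_{t,k}\|_F^2=f_{k-t}\le f_h .
\]
This is fine, and even sharper than needed.

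\textbf{Why the drift route fails.} The inequality \(\E\|(\Id-P_t)x_k\|_F^2\le 2f_{t-1}+2\,\E\|x_{t-1}-x_k\|_F^2\) is true. However, its drift term is genuinely of order \((k-t+1)f_{t-1}\), not \(O(f_h)\), and this is not an artifact of Cauchy--Schwarz. Take the commuting law \(P_t=\mathrm{diag}(1,\xi_t,1-\xi_t)\) with \(\xi_t\sim\mathrm{Bern}(1-\varepsilon)\) and \(\varepsilon k\ll 1\). Then \(f_h\approx\varepsilon\), while \(\E\|x_{t-1}-x_k\|_F^2\approx (k-t+1)\varepsilon\). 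Here the drift equals the energy drop \(\sum_{j=t}^{k}f_{j-1}\) exactly, because a product of commuting projectors is a projector. That is the best your ``orthogonality in energy'' idea could ever deliver, and it is still too large. Averaging over the late indices gives \(\frac1k\sum_{t>k-h}\E\|x_{t-1}-x_k\|_F^2\asymp h\,f_h\), even though \(\mathcal R^\Pi(k)=0\) for this law.

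\textbf{Why the time-reversal fallback fails.} Transposition gives \(\|(\Id-P_t)P_k\cdots P_1\|_2=\|P_1\cdots P_k(\Id-P_t)\|_2\). This moves the defect factor to the opposite end of the chain, not next to the long prefix. Exchangeability only relabels the i.i.d. factors; the distinguished factor \(P_t\) stays at position \(t\) in the product. So late positions cannot be mapped to early ones.

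\textbf{The missing idea.} Expand \(P_t=\Id-(\Id-P_t)\) at its own position in the chain:
\[
(\Id-P_t)R_{t,k}P_tL_{t-1}=(\Id-P_t)R_{t,k}L_{t-1}-(\Id-P_t)R_{t,k}(\Id-P_t)L_{t-1}.
\]
Write \(a_m:=\E\|(\Id-P)P_m\cdots P_1\|_2^2\le f_m\).
\begin{itemize}
\item The first term is the defect of \(P_t\) applied to the chain with \(P_t\) deleted. That chain is a product of \(k-1\) i.i.d. projectors independent of \(P_t\), so its expected squared operator norm is \(a_{k-1}\le f_{k-1}\le f_h\).
\item The second term has operator norm at most \(\min\{\|(\Id-P_t)R_{t,k}\|_2,\|(\Id-P_t)L_{t-1}\|_2\}\). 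Its second moment is at most \(\min\{f_{k-t},f_{t-1}\}=f_{\max\{k-t,t-1\}}\). This is at most \(f_h\), because \(f\) is non-increasing and \(\max\{k-t,t-1\}\ge\lceil (k-1)/2\rceil\).
\end{itemize}
Using \((a+b)^2\le 2a^2+2b^2\), this gives \(\E\|(\Id-P_t)P_k\cdots P_1\|_2^2\le 4f_h\) uniformly in \(t\). That is exactly the paper's argument.

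\textbf{How to finish.} In your split you only need this for the late indices. There \(t-1\ge k-h\ge h\), so no off-by-one correction arises. Combined with your early bound and the factor \((k-1)/k\le 1\), it yields \(\mathcal R^\Pi(k)\le 4f_h\). The bound on \(F^\Pi(k)\) then follows from Proposition~\ref{prop:appendix-residual-surrogate}, as you say.
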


\begin{proof}
For \(1\le t\le k-1\), define
\[
r_{k,t}:=\E\|(\Id-P_t)P_k\cdots P_1\|_2^2.
\]
As in the random-order proof of \citet{evron2022catastrophic}, one expands around the position of \(P_t\) and uses \((a+b)^2\le 2a^2+2b^2\) to obtain
\[
r_{k,t}\le 2\big(a_{k-1}+a_{\max\{k-t,t-1\}}\big),
\]
where
\[
a_m:=\E\|(\Id-P)P_m\cdots P_1\|_2^2.
\]
Since \(a_m\le f_m\) and \(f_m\) is non-increasing, every term on the right-hand side is at most \(f_h\). Averaging over \(t\) gives
\[
\mathcal R^\Pi(k)\le 4f_h.
\]
The bound for \(F^\Pi(k)\) then follows from Proposition~\ref{prop:appendix-residual-surrogate}.
\end{proof}

\begin{corollary}[One-step forgetting and pairwise overlap]\label{cor:appendix-f1}
The one-step Frobenius surrogate satisfies
\[
f_1=\tr(\Pbar)-\tr(\Pbar^2).
\]
If \(\lambda_1,\dots,\lambda_d\in[0,1]\) are the eigenvalues of \(\Pbar\), then
\[
f_1=\sum_{i=1}^d \lambda_i(1-\lambda_i).
\]
\end{corollary}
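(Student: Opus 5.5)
The plan is to specialize the exact Frobenius-chain formula of Proposition~\ref{prop:appendix-frobenius-chain} to $m=1$ and then compute the resulting trace spectrally. By that proposition,
\[
f_1=\tr\!\big(S_\Pi(\Id-\Pbar)\big)=\E\tr\!\big(P_t(\Id-\Pbar)P_t\big).
\]

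The first step is to move the trace inside the expectation and use cyclicity together with idempotence, $P_t^2=P_t$. This gives $\tr(P_t(\Id-\Pbar)P_t)=\tr(P_t)-\tr(P_t\Pbar)$. Taking expectations, and using that $\Pbar$ is deterministic, yields
\[
f_1=\tr(\Pbar)-\tr(\Pbar\,\Pbar)=\tr(\Pbar)-\tr(\Pbar^2).
\]
As a cross-check, the same identity follows from the alternative form $f_1=\kappa_1-\kappa_2$, since $\kappa_2=\tr(S_\Pi(\Pbar))=\E\tr(P_t\Pbar)=\tr(\Pbar^2)$.

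For the eigenvalue form, note that $\Pbar=\E[P_t]$ is symmetric and satisfies $0\preceq\Pbar\preceq\Id$, because each $P_t$ is an orthogonal projector. Diagonalize $\Pbar$ orthogonally with eigenvalues $\lambda_i\in[0,1]$. Then $\tr(\Pbar)=\sum_i\lambda_i$ and $\tr(\Pbar^2)=\sum_i\lambda_i^2$, so
\[
f_1=\sum_i\lambda_i(1-\lambda_i).
\]

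I do not see a real obstacle. The only points needing care are that the expectation and trace may be interchanged, which holds because the entries of each $P_t$ are bounded, and that $P_t$ is symmetric, which is what makes $\Pbar$ diagonalizable with eigenvalues in $[0,1]$.
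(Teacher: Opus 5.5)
Your proposal is correct and follows the paper's route: the paper also specializes Proposition~\ref{prop:appendix-frobenius-chain} to $m=1$, reading off $f_1=\kappa_1-\kappa_2=\tr(\Pbar)-\tr(\Pbar^2)$ (your cross-check), and then diagonalizes $\Pbar$. Your expansion of $\tr\big(S_\Pi(\Id-\Pbar)\big)$ via idempotence of $P_t$ is the same computation written out explicitly.
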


\begin{proof}
Apply Proposition~\ref{prop:appendix-frobenius-chain} with \(m=1\):
\[
f_1=\kappa_1-\kappa_2=\tr(\Pbar)-\tr(\Pbar^2).
\]
Diagonalizing \(\Pbar\) yields the eigenvalue formula.
\end{proof}

\begin{corollary}[Exponential decay from spectral diffuseness]\label{cor:appendix-exp}
Let \(\lambda_\Pi:=\|\Pbar\|_2\). Then
\[
f_m\le \lambda_\Pi^{m-1}f_1
\qquad \text{for every } m\ge 1.
\]
Consequently, for \(h=\lceil (k-1)/2\rceil\),
\[
F^\Pi(k)\le \mathcal R^\Pi(k)\le 4\lambda_\Pi^{h-1}f_1.
\]
\end{corollary}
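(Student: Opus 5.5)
The plan is to sharpen the monotonicity step in the proof of Proposition~\ref{prop:appendix-frobenius-chain} into a contraction estimate, and then plug the result into Proposition~\ref{prop:appendix-transfer}. As in that proof, set \(A_m:=S_\Pi^m(\Id-\Pbar)\), so that \(f_m=\tr(A_m)\) by the Frobenius-chain formula. Since \(0\preceq\Pbar\preceq\Id\), we have \(\Id-\Pbar\succeq 0\). By Lemma~\ref{lem:appendix-SPi}(iii), \(S_\Pi\) preserves positive semidefiniteness, so \(A_m\succeq 0\) for every \(m\ge 0\).

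The key step is the one-step inequality \(f_{m+1}\le \lambda_\Pi f_m\). To prove it, first write
\[
f_{m+1}=\tr(S_\Pi(A_m))=\E\tr(P_tA_mP_t)=\E\tr(P_tA_m)=\tr(\Pbar A_m),
\]
using cyclicity of the trace and \(P_t^2=P_t\). Next, because \(A_m\succeq 0\),
\[
\tr(\Pbar A_m)=\tr\!\big(A_m^{1/2}\Pbar A_m^{1/2}\big)\le \|\Pbar\|_2\,\tr(A_m)=\lambda_\Pi f_m .
\]
The inequality holds because \(A_m^{1/2}\Pbar A_m^{1/2}\preceq \|\Pbar\|_2 A_m\). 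Inducting from \(m=1\) then gives \(f_m\le\lambda_\Pi^{m-1}f_1\) for all \(m\ge1\).

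For the second claim, note that \(k\ge 2\) implies \(h=\lceil (k-1)/2\rceil\ge 1\), so the first claim applies at \(m=h\) and gives \(f_h\le\lambda_\Pi^{h-1}f_1\). Combining this with Proposition~\ref{prop:appendix-residual-surrogate} (\(F^\Pi(k)\le\mathcal R^\Pi(k)\)) and Proposition~\ref{prop:appendix-transfer} (\(\mathcal R^\Pi(k)\le 4f_h\)) yields
\[
F^\Pi(k)\le\mathcal R^\Pi(k)\le 4\lambda_\Pi^{h-1}f_1 .
\]

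The main obstacle is the one-step contraction. A naive operator-norm bound on \(S_\Pi\) would give the rate \(\rho_\Pi\), not \(\lambda_\Pi\). The argument works only because the quantity being propagated is a trace of a PSD matrix, which collapses one application of \(S_\Pi\) to multiplication by \(\Pbar\). Positivity of \(A_m\) is therefore essential, and it is supplied by Lemma~\ref{lem:appendix-SPi}(iii).
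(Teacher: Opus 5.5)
Your proposal is correct and follows the same route as the paper. Both set \(A_m=S_\Pi^m(\Id-\Pbar)\succeq 0\), use \(\tr(S_\Pi(A_m))=\tr(\Pbar A_m)\le\|\Pbar\|_2\tr(A_m)\) to get \(f_{m+1}\le\lambda_\Pi f_m\), iterate, and then invoke Proposition~\ref{prop:appendix-transfer}. Your additional details, namely the \(A_m^{1/2}\Pbar A_m^{1/2}\preceq\|\Pbar\|_2A_m\) justification and the check that \(h\ge 1\), spell out steps the paper leaves implicit.
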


\begin{proof}
Let \(A_m:=S_\Pi^m(\Id-\Pbar)\succeq 0\). Then
\[
f_{m+1}=\tr(S_\Pi(A_m))=\tr(\Pbar A_m)\le \|\Pbar\|_2\,\tr(A_m)=\lambda_\Pi f_m.
\]
Iterating proves the first claim, and Proposition~\ref{prop:appendix-transfer} yields the forgetting bound.
\end{proof}

\begin{corollary}[Recovering the coarse \(O(1/k)\) rate]\label{cor:appendix-coarse}
Let \(\nu_\Pi:=\tr(\Pbar)=\E\,\tr(P_t)\). Then
\[
f_m\le \frac{\nu_\Pi}{m}
\qquad \text{for every } m\ge 1,
\]
and therefore
\[
F^\Pi(k)\le \mathcal R^\Pi(k)\le \frac{8\nu_\Pi}{k-1}.
\]
\end{corollary}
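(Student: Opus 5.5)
The plan is to combine the telescoping structure of Proposition~\ref{prop:appendix-frobenius-chain} with the monotonicity of \(f_m\), and then feed the result into the transfer bound of Proposition~\ref{prop:appendix-transfer}. No new spectral analysis is needed; everything reduces to bookkeeping with the sequence \(\kappa_m\).

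First, I will establish \(f_m\le \nu_\Pi/m\). By Proposition~\ref{prop:appendix-frobenius-chain}, \(f_j=\kappa_j-\kappa_{j+1}\) for every \(j\ge 1\). Summing over \(j=1,\dots,m\) therefore telescopes to
\[
\sum_{j=1}^m f_j=\kappa_1-\kappa_{m+1}.
\]
Next, \(\kappa_{m+1}=\tr\bigl(S_\Pi^{m}(\Pbar)\bigr)\ge 0\). This holds because \(\Pbar\succeq 0\), \(S_\Pi\) preserves positive semidefiniteness by Lemma~\ref{lem:appendix-SPi}(iii), and the trace of a PSD matrix is nonnegative. Hence \(\sum_{j=1}^m f_j\le \kappa_1=\tr(\Pbar)=\nu_\Pi\). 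Since \(\{f_j\}\) is non-increasing (again by Proposition~\ref{prop:appendix-frobenius-chain}), every term in the sum is at least \(f_m\). This gives \(m f_m\le \sum_{j=1}^m f_j\le \nu_\Pi\), which is the first claim. The only point needing care is that the indexing convention \(\kappa_1=\tr(\Pbar)\), \(\kappa_m=\tr(S_\Pi^{m-1}(\Pbar))\) makes the identity \(f_j=\kappa_j-\kappa_{j+1}\) valid from \(j=1\) onward. I will take that identity as given from Proposition~\ref{prop:appendix-frobenius-chain}.

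Second, I will derive the forgetting bound. For \(k\ge 2\), set \(h=\lceil (k-1)/2\rceil\ge 1\). Under the normalization assumptions, Propositions~\ref{prop:appendix-residual-surrogate} and~\ref{prop:appendix-transfer} give \(F^\Pi(k)\le \mathcal R^\Pi(k)\le 4f_h\). The first step then yields \(4f_h\le 4\nu_\Pi/h\). Since \(h\ge (k-1)/2\), this is at most \(8\nu_\Pi/(k-1)\).

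I do not expect a real obstacle. The only substantive ingredients are the nonnegativity of \(\kappa_{m+1}\) and the monotonicity of \(f_m\), and both are already available from earlier results. The rest is arithmetic.
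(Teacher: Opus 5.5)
Your proposal is correct and follows the paper's proof essentially step for step: telescope \(\sum_{j=1}^m f_j=\kappa_1-\kappa_{m+1}\le\nu_\Pi\), use monotonicity to get \(mf_m\le\nu_\Pi\), then chain \(F^\Pi(k)\le\mathcal R^\Pi(k)\le 4f_h\le 4\nu_\Pi/h\le 8\nu_\Pi/(k-1)\). The only addition is your explicit justification of \(\kappa_{m+1}\ge 0\) via positivity preservation of \(S_\Pi\) (Lemma~\ref{lem:appendix-SPi}(iii)); the paper uses this fact without comment.
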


\begin{proof}
Since \(f_m=\kappa_m-\kappa_{m+1}\) and \(f_m\) is non-increasing,
\[
mf_m\le \sum_{s=1}^m f_s=\kappa_1-\kappa_{m+1}\le \kappa_1=\nu_\Pi.
\]
Thus \(f_m\le \nu_\Pi/m\). Combining this with Proposition~\ref{prop:appendix-transfer} gives
\[
F^\Pi(k)\le \mathcal R^\Pi(k)\le 4f_h\le \frac{4\nu_\Pi}{h}\le \frac{8\nu_\Pi}{k-1}.
\]
\end{proof}

\subsection{Special projection families}

\begin{proposition}[Exact formula for commuting projector laws]\label{prop:appendix-commuting-formula}
Assume all projectors in the support of \(\Pi\) commute. Then there exists an orthonormal basis in which each projector is diagonal:
\[
P_\alpha=\operatorname{diag}(\xi_{\alpha,1},\dots,\xi_{\alpha,d}),
\qquad
\xi_{\alpha,i}\in\{0,1\}.
\]
Let
\[
p_i:=\mathbb P(\xi_{\alpha,i}=1).
\]
Then for every \(m\ge 0\),
\[
f_m=\sum_{i=1}^d p_i^m(1-p_i).
\]
Hence, with \(h=\lceil (k-1)/2\rceil\),
\[
F^\Pi(k)\le \mathcal R^\Pi(k)\le 4\sum_{i=1}^d p_i^h(1-p_i).
\]
\end{proposition}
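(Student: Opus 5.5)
The plan is to reduce everything to a coordinatewise computation in a common eigenbasis, and then plug the result into the bounds already established earlier in this appendix.

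\textbf{Step 1: common diagonalization.} The support of \(\Pi\) is a family of pairwise commuting orthogonal projectors, hence of commuting symmetric matrices. They therefore admit a common orthonormal eigenbasis. For a possibly infinite family, I would note that the real span of the family is a commuting subspace of \(\mathbb S^d\), which is finite-dimensional. Choosing a finite basis of that span and simultaneously diagonalizing those finitely many matrices diagonalizes the whole family. In this basis each \(P_\alpha\) is diagonal, and since \(P_\alpha^2=P_\alpha\), its diagonal entries \(\xi_{\alpha,i}\) lie in \(\{0,1\}\). This gives the first claim and makes \(p_i=\mathbb P(\xi_{\alpha,i}=1)\) well defined.

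\textbf{Step 2: computing \(f_m\).} Let \(P,P_1,\dots,P_m\) be i.i.d.\ draws. The matrix \((\Id-P)P_m\cdots P_1\) is diagonal with \(i\)-th entry \((1-\xi_i)\xi_{m,i}\cdots\xi_{1,i}\), whose square equals itself. Hence
\[
\|(\Id-P)P_m\cdots P_1\|_F^2=\sum_i (1-\xi_i)\prod_{s=1}^m\xi_{s,i}.
\]
Taking expectations and using independence across the draws (coordinates within a single draw may be dependent, but only one coordinate \(i\) appears in each term) gives \(\sum_i (1-p_i)p_i^m\).

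As a cross-check, one can also derive this from \(f_m=\tr(S_\Pi^m(\Id-\Pbar))\) in Proposition~\ref{prop:appendix-frobenius-chain}. On diagonal matrices, \(S_\Pi\) acts by multiplying the \(i\)-th entry by \(\mathbb E\xi_i^2=p_i\), and \(\Id-\Pbar=\operatorname{diag}(1-p_i)\).

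\textbf{Step 3: the forgetting bound.} Substitute \(f_h\) from Step 2 into Proposition~\ref{prop:appendix-transfer}, which gives \(\mathcal R^\Pi(k)\le 4f_h\). Then combine with Proposition~\ref{prop:appendix-residual-surrogate} to get \(F^\Pi(k)\le\mathcal R^\Pi(k)\), under the standing normalization assumptions of this subsection.

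\textbf{Main obstacle.} The only delicate point is the simultaneous diagonalization when the support is infinite; the finite-dimensional span argument in Step 1 handles it. Everything else is routine.
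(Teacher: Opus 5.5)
Your proposal is correct and follows the paper's proof essentially verbatim: diagonalize in a common eigenbasis, read off the diagonal entries of \((\Id-P)P_m\cdots P_1\), take expectations using independence across the i.i.d.\ draws, and then invoke Propositions~\ref{prop:appendix-transfer} and~\ref{prop:appendix-residual-surrogate}. Two of your steps are sound additions to what the paper takes for granted: the finite-dimensional-span argument that justifies simultaneous diagonalization when the support is infinite, and the cross-check via \(f_m=\tr(S_\Pi^m(\Id-\Pbar))\).
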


\begin{proof}
In the common eigenbasis,
\[
(\Id-P)P_m\cdots P_1
=
\operatorname{diag}\big((1-\xi_i)\xi_{m,i}\cdots \xi_{1,i}\big)_{i=1}^d.
\]
Therefore
\[
\|(\Id-P)P_m\cdots P_1\|_F^2
=
\sum_{i=1}^d (1-\xi_i)\xi_{m,i}\cdots \xi_{1,i}.
\]
Taking expectations and using independence across the i.i.d. projector samples yields
\[
f_m=\sum_{i=1}^d (1-p_i)p_i^m.
\]
The forgetting bound follows from Proposition~\ref{prop:appendix-transfer}.
\end{proof}

\begin{proposition}[Rank-one projector laws]\label{prop:appendix-rankone}
Assume every projector in the support of \(\Pi\) has rank one, so
\[
P_\alpha=u_\alpha u_\alpha^\top,
\qquad
\|u_\alpha\|_2=1.
\]
Define the second and fourth directional moments
\[
M:=\E[u_\alpha u_\alpha^\top]=\Pbar,
\qquad
H:=\E\!\big[(u_\alpha u_\alpha^\top)\otimes (u_\alpha u_\alpha^\top)\big].
\]
If \(\alpha,\beta\) are independent samples from \(\Pi\), then
\[
f_1=1-\tr(M^2)=1-\E\big[(u_\alpha^\top u_\beta)^2\big],
\]
and for every \(m\ge 0\),
\[
f_m=\langle \vecop(\Id),\,H^m\,\vecop(\Id-M)\rangle.
\]
\end{proposition}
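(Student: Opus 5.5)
The plan is to prove the two formulas in the order stated, treating $f_1$ as a direct computation and $f_m$ as a vectorized rewriting of the Frobenius-chain formula. Throughout, $P=u_\alpha u_\alpha^\top$ is a rank-one projector, so $\Id-P$ is also a projector and $\|(\Id-P)v\|^2=\|v\|^2-(u_\alpha^\top v)^2$.

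\textbf{The formula for $f_1$.} Apply Corollary~\ref{cor:appendix-f1}, which gives $f_1=\tr(\Pbar)-\tr(\Pbar^2)$. Here $\Pbar=M$ and $\tr(M)=\E\|u_\alpha\|_2^2=1$, so $f_1=1-\tr(M^2)$. For the second form, take independent $\alpha,\beta$ and write
\[
\tr(M^2)=\E\tr(u_\alpha u_\alpha^\top u_\beta u_\beta^\top)=\E\big[(u_\alpha^\top u_\beta)^2\big].
\]
This uses independence to factor $M^2$ as $\E[P_\alpha P_\beta]$.

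\textbf{The formula for $f_m$.} Start from Proposition~\ref{prop:appendix-frobenius-chain}, which gives $f_m=\tr(S_\Pi^m(\Id-M))$. Vectorize with $\vecop(ABC)=(C^\top\otimes A)\vecop(B)$. For symmetric $P_\alpha$ this yields
\[
\vecop(S_\Pi(A))=\E[P_\alpha\otimes P_\alpha]\vecop(A)=H\vecop(A).
\]
Iterating gives $\vecop(S_\Pi^m(A))=H^m\vecop(A)$. Combined with $\tr(B)=\langle\vecop(\Id),\vecop(B)\rangle$, this produces $f_m=\langle\vecop(\Id),H^m\vecop(\Id-M)\rangle$. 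As a check, the case $m=0$ gives $\tr(\Id-M)=d-1$, which agrees with $f_0=\E\|\Id-P\|_F^2=d-1$.

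\textbf{Main obstacle.} The only delicate step is bookkeeping: confirming that the Kronecker factor is $P_\alpha\otimes P_\alpha$ with no transpose issues, and checking the identification of $f_1$ against $\kappa_2=\tr(S_\Pi(\Pbar))$. For the latter,
\[
\tr(S_\Pi(\Pbar))=\E\tr(P_\alpha\Pbar P_\alpha)=\tr(\Pbar\,\E P_\alpha)=\tr(M^2),
\]
which is consistent. Beyond this, the argument only substitutes the rank-one structure into results already proved.
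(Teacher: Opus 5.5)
Your proposal is correct and follows the paper's proof essentially step for step. You get $f_1$ from Corollary~\ref{cor:appendix-f1} using $\tr(M)=1$ and independence ($\tr(M^2)=\E[(u_\alpha^\top u_\beta)^2]$), and $f_m$ by vectorizing the Frobenius-chain formula of Proposition~\ref{prop:appendix-frobenius-chain} via $\vecop(PXP)=(P\otimes P)\vecop(X)$; your extra $m=0$ and $\kappa_2$ consistency checks are correct but not needed.
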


\begin{proof}
Because every projector has trace one, Corollary~\ref{cor:appendix-f1} gives
\[
f_1=\tr(\Pbar)-\tr(\Pbar^2)=1-\tr(M^2).
\]
If \(P_\alpha=u_\alpha u_\alpha^\top\) and \(P_\beta=u_\beta u_\beta^\top\), then
\[
\tr(P_\alpha P_\beta)=(u_\alpha^\top u_\beta)^2,
\]
so the pairwise-overlap form follows by averaging over independent samples.

For the full chain, Proposition~\ref{prop:appendix-frobenius-chain} yields
\[
f_m=\tr\!\big(S_\Pi^m(\Id-\Pbar)\big).
\]
Vectorizing matrices gives
\[
\vecop(P_\alpha X P_\alpha)=(P_\alpha\otimes P_\alpha)\vecop(X),
\]
hence the superoperator \(S_\Pi\) is represented by \(H\) on vectorized matrices. Therefore
\[
\vecop\!\big(S_\Pi^m(\Id-\Pbar)\big)=H^m\vecop(\Id-M),
\]
and pairing with \(\vecop(\Id)\) recovers the trace:
\[
f_m
=
\langle \vecop(\Id),\,\vecop(S_\Pi^m(\Id-\Pbar))\rangle
=
\langle \vecop(\Id),\,H^m\,\vecop(\Id-M)\rangle.
\]
\end{proof}

\begin{proposition}[Two-component mixture decomposition for one-step forgetting]\label{prop:appendix-mixture}
Suppose
\[
\Pi=q\Pi_A+(1-q)\Pi_B,
\qquad 0\le q\le 1,
\]
with mean projectors
\[
\Pbar_A:=\E_{\Pi_A}[P_\alpha],
\qquad
\Pbar_B:=\E_{\Pi_B}[P_\alpha].
\]
Let
\[
\nu_A:=\tr(\Pbar_A),
\qquad
\nu_B:=\tr(\Pbar_B),
\]
and define the within- and cross-mode overlaps
\[
\kappa_{AA}:=\tr(\Pbar_A^2),
\qquad
\kappa_{BB}:=\tr(\Pbar_B^2),
\qquad
\kappa_{AB}:=\tr(\Pbar_A\Pbar_B).
\]
Then the one-step surrogate obeys
\[
\begin{aligned}
f_1(\Pi)
&=
q\nu_A+(1-q)\nu_B \\
&\quad -\Big(q^2\kappa_{AA}+(1-q)^2\kappa_{BB}+2q(1-q)\kappa_{AB}\Big).
\end{aligned}
\]
\end{proposition}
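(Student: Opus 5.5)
The plan is to reduce everything to Corollary~\ref{cor:appendix-f1}, which gives \(f_1(\Pi)=\tr(\Pbar)-\tr(\Pbar^2)\) for any projector law. The only work is then computing \(\Pbar\) for the mixture \(\Pi=q\Pi_A+(1-q)\Pi_B\) and expanding the two traces.

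First, since expectation under a mixture is the corresponding convex combination of component expectations, I will record
\[
\Pbar=\E_\Pi[P_\alpha]=q\,\E_{\Pi_A}[P_\alpha]+(1-q)\,\E_{\Pi_B}[P_\alpha]=q\Pbar_A+(1-q)\Pbar_B.
\]
This requires \(\Pi_A,\Pi_B\) to be probability laws on tasks (hence on projectors), which is implicit in the statement. The degenerate endpoints \(q\in\{0,1\}\) need no separate treatment.

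Second, linearity of trace gives \(\tr(\Pbar)=q\nu_A+(1-q)\nu_B\). Expanding the square,
\[
\tr(\Pbar^2)=q^2\tr(\Pbar_A^2)+(1-q)^2\tr(\Pbar_B^2)+q(1-q)\big(\tr(\Pbar_A\Pbar_B)+\tr(\Pbar_B\Pbar_A)\big).
\]
Cyclicity of trace gives \(\tr(\Pbar_B\Pbar_A)=\tr(\Pbar_A\Pbar_B)=\kappa_{AB}\). Substituting into Corollary~\ref{cor:appendix-f1} yields exactly the displayed formula.

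I do not expect any step to be a genuine obstacle. The only point needing care is the cross term, where I must use cyclicity to merge the two orderings into the single factor \(2q(1-q)\kappa_{AB}\). As a sanity check I would also derive \(\tr(\Pbar^2)\) directly as \(\E\tr(P_\alpha P_\beta)\) over independent draws \(\alpha,\beta\sim\Pi\). Conditioning on which components the two draws came from gives probabilities \(q^2\), \((1-q)^2\), and \(2q(1-q)\), which reproduces the same weights on within-mode and cross-mode overlaps.
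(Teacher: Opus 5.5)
Your proposal is correct and follows the paper's proof: both apply Corollary~\ref{cor:appendix-f1}, write \(\Pbar=q\Pbar_A+(1-q)\Pbar_B\), and expand \(\tr(\Pbar)\) linearly and \(\tr(\Pbar^2)\) as a square. Your explicit use of cyclicity to merge the two cross terms into \(2q(1-q)\kappa_{AB}\) spells out the one step the paper leaves implicit.
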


\begin{proof}
By Corollary~\ref{cor:appendix-f1},
\[
f_1(\Pi)=\tr(\Pbar)-\tr(\Pbar^2),
\qquad
\Pbar=q\Pbar_A+(1-q)\Pbar_B.
\]
Expanding the trace linearly gives
\[
\tr(\Pbar)=q\nu_A+(1-q)\nu_B.
\]
Expanding the square gives
\[
\tr(\Pbar^2)=q^2\kappa_{AA}+(1-q)^2\kappa_{BB}+2q(1-q)\kappa_{AB}.
\]
Substituting these two identities proves the formula.
\end{proof}

\subsection{What the projection line misses}

\begin{proposition}[Projector statistics are blind to loss scale]\label{prop:appendix-scale-blind}
Fix any realizable task law \(\Pi\), and for \(\varepsilon>0\) define a rescaled law \(\Pi^{(\varepsilon)}\) by
\[
(X_t,y_t)\mapsto (\varepsilon X_t,\varepsilon y_t).
\]
Then:
\begin{enumerate}[label=(\roman*),leftmargin=2.2em]
\item the exact-fit projector law is unchanged, so \(P_t\), \(\Pbar\), \(S_\Pi\), \(\rho_\Pi\), and all projection-based quantities in this section are the same under \(\Pi\) and \(\Pi^{(\varepsilon)}\);
\item the actual forgetting loss scales by \(\varepsilon^2\), namely
\[
F^{\Pi^{(\varepsilon)}}(k)=\varepsilon^2 F^\Pi(k).
\]
\end{enumerate}
\end{proposition}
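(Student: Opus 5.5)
Both parts come from tracking how the rescaling $(X_t,y_t)\mapsto(\varepsilon X_t,\varepsilon y_t)$ acts on the objects in Lemma~\ref{lem:appendix-exact-fit} and on the loss. The rescaled law is still realizable with the same $w^\star$, since $\varepsilon y_t=(\varepsilon X_t)w^\star$, so the setup of \Cref{sec:setup} applies unchanged.

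\textbf{Part (i).} For $\varepsilon>0$ one has $(\varepsilon X)^+=\varepsilon^{-1}X^+$, hence $(\varepsilon X)^+(\varepsilon X)=X^+X$ and $P_t^{(\varepsilon)}=I-X_t^+X_t=P_t$. Alternatively, $\Null(\varepsilon X_t)=\Null(X_t)$, and $P_t$ is the orthogonal projector onto that space. The exact-fit update is therefore the same map $u\mapsto w^\star+P_t(u-w^\star)$. Consequently the iterates $w_k$ have the same law under both task laws. The law of $P_t$ is also unchanged. Every quantity built only from the projector law therefore coincides under $\Pi$ and $\Pi^{(\varepsilon)}$. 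This covers $\Pbar=\E[P_t]$, $S_\Pi(A)=\E[P_tAP_t]$, $\rho_\Pi=\|S_\Pi\|_{\mathrm{op},F}$, $f_m$, $\mathcal R^\Pi(k)$ and $\kappa_m$.

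\textbf{Part (ii).} Under $\Pi^{(\varepsilon)}$ the row count $n_t$ is unchanged. The loss becomes
\[
\ell_t^{(\varepsilon)}(w)=\frac1{n_t}\|\varepsilon X_tw-\varepsilon y_t\|_2^2=\varepsilon^2\ell_t(w).
\]
The task sequences and the iterates can be coupled pathwise by applying the same rescaling to each drawn task. Under this coupling the iterates are identical by (i), so
\[
\frac1k\sum_{t=1}^k\ell_t^{(\varepsilon)}(w_k)=\varepsilon^2\,\frac1k\sum_{t=1}^k\ell_t(w_k)
\]
holds pathwise. Taking expectations gives $F^{\Pi^{(\varepsilon)}}(k)=\varepsilon^2F^\Pi(k)$.

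As a cross-check, Theorem~\ref{thm:exact-forgetting-identity} gives the same conclusion. There $C_t^{(\varepsilon)}=\varepsilon^2C_t$ while $S_\Pi$, $P_t$ and $X_\star$ are unchanged, so linearity of $S_\Pi^{k-t}$ and of the trace pulls out the factor $\varepsilon^2$.

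There is no real obstacle. The one point to handle carefully is the coupling: the two sequences must be generated from the same draws, so that the equality of iterates is pathwise rather than only in law. The trace-identity cross-check avoids the coupling altogether, so it can serve as the primary argument if preferred.
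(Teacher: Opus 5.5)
Your proof is correct and is essentially the paper's argument: $(\varepsilon X)^+=\varepsilon^{-1}X^+$ leaves every $P_t$, and hence the iterates and all projector statistics, unchanged, while each loss $\ell_t$ picks up the factor $\varepsilon^2$. Your explicit pathwise coupling makes precise what the paper leaves implicit in ``for any iterate $w_k$''. Your cross-check via Theorem~\ref{thm:exact-forgetting-identity} (where $C_t\mapsto\varepsilon^2C_t$) is a valid alternative for $k\ge 2$; the case $k=1$ is trivial because both sides vanish.
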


\begin{proof}
The Moore--Penrose pseudoinverse satisfies \((\varepsilon X)^+=\varepsilon^{-1}X^+\), hence
\[
I-(\varepsilon X)^+(\varepsilon X)=I-X^+X.
\]
Therefore every projector \(P_t\) is unchanged by the rescaling, and so are all statistics built only from the projector law.

On the other hand, for any iterate \(w_k\),
\[
\ell_t^{(\varepsilon)}(w_k)
=
\frac{1}{n_t}\|\varepsilon X_t w_k-\varepsilon y_t\|_2^2
=
\varepsilon^2 \ell_t(w_k).
\]
Averaging over \(t\) and expectations gives
\[
F^{\Pi^{(\varepsilon)}}(k)=\varepsilon^2 F^\Pi(k).
\]
\end{proof}

Proposition~\ref{prop:appendix-scale-blind} explains the role of the current paper. Projection-based quantities can capture random-order contraction properties and sometimes yield useful upper bounds, but they do not determine the scale of actual loss-valued forgetting. That scale depends on the visible covariance \(C_t\) and the target-dependent matrix \(X_\star\), which is why the loss-level identity of Theorem~\ref{thm:exact-forgetting-identity} is necessary.

Viewed by resolution, the projection line retains increasingly coarse summaries of the family geometry:
\[
S_\Pi \;\Longrightarrow\; \{f_m\}_{m\ge 0} \;\Longrightarrow\; f_1 \;\Longrightarrow\; \Pbar \;\Longrightarrow\; \nu_\Pi.
\]
The present paper keeps the full loss-level object \(S_\Pi^{k-t}(C_t)\,P_t\,S_\Pi^{t-1}(X_\star)\,P_t\), whereas the coarse random-order \(O(1/k)\) law only retains the last scalar \(\nu_\Pi\). Appendix~\ref{app:additional-results} shows that the projection route already sees a substantial amount of geometry, but Proposition~\ref{prop:appendix-scale-blind} shows that it cannot determine actual forgetting on its own.

\section{Experimental Reproducibility}\label{app:reproducibility}

The released reproduction package is minimal: beyond \texttt{README.md} and \texttt{.gitignore}, the only code files needed for \Cref{fig:experiment-bound-compare,fig:experiment-rho-comparison,fig:experiment-richness} are \texttt{common.py}, \texttt{synthetic\_experiments.py}, \texttt{make\_claim1\_bound\_compare.py}, \texttt{make\_claim1\_rho\_comparison.py}, \texttt{run\_claim1\_longhorizon.sh}, \texttt{run\_geometry\_richness\_sweep.py}, and \texttt{make\_geometry\_richness\_figure.py}. The experiment uses a realizable linear setting with ambient dimension \(d=192\) and task rank \(r=48\). For each \(\alpha\in\{0.2,0.4,0.6,0.8\}\), the script fixes a family seed \(5000+i\) and a run seed \(7000+i\), where \(i\in\{0,1,2,3\}\) is the index of \(\alpha\). It first samples a unit vector \(u\in\mathbb{R}^d\) from the family seed and sets the shared solution to \(w^\star=u\). For each task \(t\), it then samples an orthonormal basis \(Q_t\in\mathbb{R}^{d\times r}\) for the visible subspace: with probability \(\alpha\), \(Q_t\) is sampled uniformly among rank-\(r\) subspaces orthogonal to \(u\), and with probability \(1-\alpha\), it is sampled uniformly among all rank-\(r\) subspaces. Concretely, the implementation draws a Gaussian matrix \(G\in\mathbb{R}^{d\times(r+5)}\), replaces \(G\) by \(G-u(u^\top G)\) when orthogonality to \(u\) is required, applies a QR decomposition, and keeps the first \(r\) orthonormal columns; if the sampled matrix is numerically rank-deficient, the draw is repeated. We then define
\[
X_t = Q_t^\top,\qquad y_t = Q_t^\top w^\star,
\]
so every task shares the same solution \(w^\star\). Equivalently, if \(P_t = I-Q_tQ_t^\top\), then the exact-fit dynamics are
\[
w_t = w^\star + P_t(w_{t-1}-w^\star),\qquad w_0 = 0.
\]
For this synthetic instantiation we evaluate forgetting through
\[
F^\Pi(k)=\E\!\left[\frac1k\sum_{t=1}^k \ell_t(w_k)\right],
\qquad
\ell_t(w)=\frac1r\|Q_t^\top(w-w^\star)\|_2^2.
\]

\begin{algorithm}[t]
\caption{Synthetic shared-null-spike experiment used in \Cref{fig:experiment-bound-compare,fig:experiment-rho-comparison}}
\label{alg:synthetic-claim1}
\begin{algorithmic}[1]
\Require \(d=192\), \(r=48\), \(\mathcal{A}=\{0.2,0.4,0.6,0.8\}\), \(\mathcal{K}=\{4,8,16,32,64,128,256\}\), \(N=400\)
\ForAll{\(\alpha\in\mathcal{A}\)}
    \State Sample a unit vector \(u\in\mathbb{R}^d\) and set \(w^\star\gets u\)
    \ForAll{\(k\in\mathcal{K}\)}
        \For{\(j=1,\dots,N\)}
            \State \(e\gets -w^\star\), \(\mathcal{Q}\gets \emptyset\)
            \For{\(t=1,\dots,k\)}
                \State With probability \(\alpha\), sample \(Q_t\) uniformly among rank-\(r\) subspaces of \(u^\perp\)
                \State Otherwise sample \(Q_t\) uniformly among all rank-\(r\) subspaces of \(\mathbb{R}^d\)
                \State Append \(Q_t\) to \(\mathcal{Q}\)
                \State \(e \gets e - Q_t(Q_t^\top e)\)
            \EndFor
            \State \(f_j(k)\gets \frac{1}{kr}\sum_{Q\in\mathcal{Q}}\|Q^\top e\|_2^2\)
        \EndFor
        \State \(\widehat F^\Pi(k)\gets \frac{1}{N}\sum_{j=1}^N f_j(k)\)
    \EndFor
\EndFor
\State \Return \(\{\widehat F^\Pi(k): \alpha\in\mathcal{A},\, k\in\mathcal{K}\}\)
\end{algorithmic}
\end{algorithm}

The code follows \Cref{alg:synthetic-claim1} exactly. For each \(\alpha\), the experiment stores the empirical curve \(\widehat F^\Pi(k)\) at the seven horizons in \(\mathcal{K}\). \Cref{fig:experiment-bound-compare} then compares this curve with the explicit upper bound from Theorem~\ref{thm:upper-bound} and with the coarse \(O(1/k)\) baseline. In this synthetic setting, \(X_\star=w^\star w^{\star\top}\) and \(\|w^\star\|_2=1\), so \(\|X_\star\|_F=1\). Also, \(C_t=\frac1r X_t^\top X_t=\frac1r Q_tQ_t^\top\) is a rescaled rank-\(r\) orthogonal projector, hence \(\|C_t\|_F=1/\sqrt r\). Therefore the explicit upper bound used in Figure~\ref{fig:experiment-bound-compare} is
\[
U_\Pi(k)=\frac{k-1}{k}\rho_\Pi^{k-1}\frac{1}{\sqrt r}.
\]
The coarse baseline plotted in the released script is
\[
B_{\mathrm{coarse}}(k)=\frac{8(d-r)}{k-1},
\]
which is the projection-based \(O(1/k)\) corollary instantiated at \(\nu_\Pi=d-r\).

Here \(\rho_\Pi\) is computed analytically for each value of \(\alpha\). Let \(m=d-r\), and define
\[
\mu_1:=\frac{m}{d},
\qquad
\mu_2:=\frac{m(m+2)}{d(d+2)},
\qquad
b:=\frac{\mu_1-\mu_2}{d-1},
\]
\[
\beta:=\frac{m-1}{d-1},
\qquad
c_2:=\frac{m(d(m+1)-2)}{d(d+2)(d-1)},
\qquad
\gamma:=\frac{(m-1)((d-1)m-2)}{(d-1)(d+1)(d-2)}.
\]
Then
\[
M_\alpha=
\begin{pmatrix}
\alpha+(1-\alpha)\mu_2 & (1-\alpha)b \\
(1-\alpha)(\mu_1-\mu_2) & \alpha\beta + (1-\alpha)(\mu_1-b)
\end{pmatrix},
\]
and
\[
\lambda_+(M_\alpha):=
\frac{\tr(M_\alpha)+\sqrt{\tr(M_\alpha)^2-4\det(M_\alpha)}}{2},
\]
\[
\lambda_{\mathrm{mix}}:=\alpha\beta+(1-\alpha)c_2,
\qquad
\lambda_{\mathrm{tr}}:=\alpha\gamma+(1-\alpha)c_2.
\]
The analytic rate used in both figures is
\[
\rho_\Pi=\max\bigl\{\lambda_+(M_\alpha),\,\lambda_{\mathrm{mix}},\,\lambda_{\mathrm{tr}}\bigr\}.
\]
\Cref{fig:experiment-rho-comparison} compares this analytic \(\rho_\Pi\) with the empirical local decay rate
\[
\hat\rho_{\mathrm{loc}}
=
\exp\!\left(
\frac{\log \widehat F^\Pi(k_{i+1})-\log \widehat F^\Pi(k_i)}{k_{i+1}-k_i}
\right),
\]
computed from adjacent horizons \(k_i<k_{i+1}\). The experiment driver \texttt{synthetic\_experiments.py} writes a cached \texttt{claim1\_summary.json}; \texttt{make\_claim1\_bound\_compare.py} and \texttt{make\_claim1\_rho\_comparison.py} read that summary and regenerate the two paper figures. Running \texttt{run\_claim1\_longhorizon.sh} from the repository root reproduces the cached summary and then regenerates both figures from that summary.

For \Cref{fig:experiment-richness}, we use the angle-richness reservoir family introduced in the geometry experiment code. We again fix \(d=192\), \(r=48\), and \(w^\star=u\), but now each task null space is constructed as
\[
N_t=\operatorname{span}(n_t)\oplus H_t,
\qquad
n_t=\cos(\theta)\,u+\sin(\theta)\,v_t,
\]
where \(v_t\) is sampled from an \(L\)-dimensional reservoir subspace \(R_L\subset u^\perp\), and \(H_t\) is an \((m-1)\)-dimensional random complement inside \(R_L\cap v_t^\perp\), with \(m=d-r\). The visible subspace basis \(Q_t\) is then sampled as an orthonormal basis of \(N_t^\perp\), and we again set \(X_t=Q_t^\top\), \(y_t=Q_t^\top w^\star\). In the main-text richness figure we fix \(\theta=30^\circ\) and sweep \(30\) values of \(L\) between \(144\) and \(191\) for both the theoretical rate \(\rho_\Pi\) and the empirical forgetting summary at horizon \(k=1024\). The horizons are \(\{4,8,16,32,64,128,256,512,1024\}\); each empirical point averages over \(24\) independent task sequences, and each value of \(\rho_\Pi\) is estimated from \(32\) sampled projectors. The released script \texttt{run\_geometry\_richness\_sweep.py} writes \texttt{geometry\_richness\_sweep.json}, and \texttt{make\_geometry\_richness\_figure.py} reads this summary and renders \Cref{fig:experiment-richness}.

\section{Numerical Computation and Practical Estimation}\label{app:numerical-estimation}

Appendix~\ref{app:reproducibility} specifies the exact protocol used for the figures in the main text. The present section addresses a different question: how the main distribution-level quantities in the paper can be computed or approximated in a general i.i.d.\ task family, and what the resulting computational costs are. We distinguish three levels of objects. First, \emph{theoretical quantities} such as \(F^\Pi(k)\), \(\rho_\Pi\), and \(\mathcal I_\Pi(A)\) are defined directly from the law \(\Pi\). Second, \emph{sample-based operator estimates} approximate these quantities from finitely many sampled tasks. Third, \emph{finite-horizon empirical diagnostics} summarize Monte Carlo forgetting curves, but are not themselves theorem-level objects.

Throughout this section, \(d\) is the ambient dimension, \(r\) is the visible rank, \(K_{\max}\) is the largest horizon of interest, \(N\) is the number of sampled task trajectories used for empirical forgetting curves, \(M\) is the number of sampled projectors used for operator-level Monte Carlo estimates, \(T\) is the number of power-iteration steps, and \(s:=\rank(A)\) when \(A\) is low rank. We assume that each task is represented by an orthonormal visible basis \(Q_t\in\R^{d\times r}\), so that \(P_t=\Id-Q_tQ_t^\top\), and that matrix products involving \(P_t\) are implemented through the low-rank factor \(Q_t\) rather than by forming \(P_t\) densely.

\begin{table}[H]
\footnotesize
\centering
\caption{Numerical estimation procedures for the main quantities in the paper. The bounds assume low-rank projector application through \(Q_t\in\R^{d\times r}\).}
\label{tab:numerical-estimation}
\setlength{\tabcolsep}{4pt}
\renewcommand{\arraystretch}{1.04}
\begin{tabularx}{\textwidth}{>{\raggedright\arraybackslash}p{0.17\textwidth}>{\raggedright\arraybackslash}p{0.28\textwidth}>{\raggedright\arraybackslash}p{0.18\textwidth}>{\raggedright\arraybackslash}p{0.15\textwidth}X}
\toprule
Quantity & Numerical procedure & Time complexity & Memory & Remarks \\
\midrule
\(\widehat F^\Pi(k)\) &
Roll out \(N\) i.i.d.\ trajectories once to \(K_{\max}\), then checkpoint requested horizons &
\(O(NK_{\max}dr)\) &
\(O(d+H)\) &
\(H\): number of stored horizons. Empirical approximation of the forgetting curve. \\[0.25em]
\(\widehat S_M(A)=\frac1M\sum_{j=1}^M P_jAP_j\) &
Sample \(M\) projectors and apply the empirical operator to a dense test matrix &
\(O(Md^2r)\) per apply &
\(O(Mdr+d^2)\) &
Basic Monte Carlo approximation to \(S_\Pi\). \\[0.25em]
\(\widehat\rho_\Pi\) &
Power iteration on \(\widehat S_M\), or analytic evaluation when available &
\(O(TM d^2r)\) &
\(O(Mdr+d^2)\) &
\(\rho_\Pi\) is a theorem quantity; the main text uses analytic values when possible. \\[0.25em]
\(\widehat{\mathcal I}_\Pi(A)\) &
Average \(\|P_jAP_j\|_F^2/\|A\|_F^2\) over \(M\) sampled projectors &
\(O(Md^2r)\) for dense \(A\); \(O(Mdrs)\) for rank-\(s\) \(A\) &
\(O(Mdr+d s)\) &
For \(A=\Pi_E\), this gives the alignment score. Structured families may admit closed forms. \\[0.25em]
\(\widehat\rho_{\mathrm{loc}}\) &
Post-process an empirical forgetting curve by adjacent-horizon log differences &
\(O(H)\) &
\(O(H)\) &
Finite-horizon diagnostic only; not a substitute for \(\rho_\Pi\). \\
\bottomrule
\end{tabularx}
\end{table}

Table~\ref{tab:numerical-estimation} emphasizes a basic distinction in the paper. The operator-level quantities \(\rho_\Pi\) and \(\mathcal I_\Pi(A)\) are often easier to estimate numerically than the full loss-valued forgetting curve, because they depend only on sampled projectors and not on long task trajectories. However, by Proposition~\ref{prop:appendix-scale-blind}, projector statistics alone do not determine the scale of actual forgetting. This is why the main theorems are stated at the loss level and why the empirical forgetting curve remains a separate object to compute.

Among the quantities in Table~\ref{tab:numerical-estimation}, the leading-rate quantity \(\rho_\Pi\) is the most stable spectral target: when a closed form is available, it should be evaluated analytically, and otherwise it can be estimated from repeated applications of \(\widehat S_M\). By contrast, the leading coefficient \(c_\Pi^{\mathrm{top}}(w^\star)\) requires estimating the top eigenspace itself and then projecting both \(C_t\) and \(X_\star\) onto that eigenspace; in general this is numerically more delicate than estimating \(\rho_\Pi\), which is why the main text does not rely on direct coefficient estimation.

The practical implementation choices in the released code follow the same principle. For empirical forgetting curves, a single rollout is carried to \(K_{\max}\) and all requested horizons are read off by checkpointing, instead of restarting the simulation independently for each \(k\). For operator quantities, tasks are stored through low-rank visible bases \(Q_t\), and all projector applications are executed as low-rank updates. These choices change wall-clock cost substantially but do not change the underlying estimators.

Finally, it is important to separate theorem quantities from diagnostics. The rate \(\rho_\Pi\) is defined by the operator \(S_\Pi\), and when a closed form is available it should be computed analytically. By contrast, adjacent-horizon rates such as \(\widehat\rho_{\mathrm{loc}}\) are only empirical finite-horizon summaries: they may be noisy, they need not be monotone, and they are not interchangeable with \(\rho_\Pi\). Their role is diagnostic rather than definitional.

\end{document}